\title{
Mitigating Degree Bias Adaptively with \\ Hard-to-Learn Nodes in
Graph Contrastive Learning}
\author{Jingyu Hu \\ University of Bristol \\ ym21669@bristol.ac.uk
        \And  Hongbo Bo \\ University of Bristol \\ hongbo.bo@bristol.ac.uk \And
        Jun Hong \\ University of the West of England \\ jun.hong@uwe.ac.uk
        \AND
        Xiaowei Liu \\ University of Bristol \\ xiaowei.liu@bristol.ac.uk
        \And
        Weiru Liu \\ University of Bristol \\ weiru.liu@bristol.ac.uk
        }
\begin{document}
\maketitle
\begin{abstract}
Graph Neural Networks (GNNs) often suffer from degree bias in node classification tasks, where prediction performance varies across nodes with different degrees. Several approaches, which adopt Graph Contrastive Learning (GCL), have been proposed to mitigate this bias. However, the limited number of positive pairs and the equal weighting of all positives and negatives in GCL still lead to low-degree nodes acquiring insufficient and noisy information.
This paper proposes the Hardness Adaptive Reweighted (HAR) contrastive loss to mitigate degree bias. It adds more positive pairs by leveraging node labels and adaptively weights positive and negative pairs based on their learning hardness. In addition, we develop an experimental framework named SHARP to extend HAR to a broader range of scenarios. Both our theoretical analysis and experiments validate the effectiveness of SHARP. The experimental results across four datasets show that SHARP achieves better performance against baselines at both global and degree levels. 
\noindent
\textbf{Keywords:} Degree Bias, Graph Contrastive Learning, Adaptive Learning, Node Classification

\end{abstract}

\section{Introduction}

One common approach to graph representation learning is to use a Graph Neural Network (GNN) like a Graph Convolutional Network (GCN) \cite{kipf2016semi} to learn graph embeddings. These learned embeddings have proven effective in many downstream tasks. However, most of these studies have focused mainly on global performance, while their subgroup-level performance has often been overlooked. Though their global performance is good, recent studies \cite{tang2020investigating,liu2023generalized} have shown that the predictions of GNNs can exhibit structural unfairness. In node classification tasks,
GNNs tend to perform better on high-degree nodes than low-degree ones \cite{liu2023generalized,wang2022uncovering}. This is also known as degree bias: the performance disparity across nodes with different degrees.

One of the most common observations on the origin of degree bias is that the neighborhoods of low-degree nodes provide insufficient and noisy information for generating their effective representations \cite{subramonian2024theoretical,tang2020investigating}. GNNs learn node representations by aggregating feature information from neighbourhoods. High-degree nodes, with more neighbors, can collect a richer set of features than low-degree nodes. Furthermore, as the few neighbors of the low-degree node may come from different classes, the aggregated noisy information can even confuse the GNNs in classification tasks. Figure 1 (Left) shows a degree bias example of using a GNN to classify occupations, where GNNs can achieve worse performance for low-degree (painter, conductor) nodes compared to high-degree nodes (music producer). For instance, when GNNs classify the conductor node, they will aggregate features from its neighbors (two producer nodes); this noisy information can mislead the model into classifying it as a producer rather than a conductor.

\begin{figure*}[http]
    \centering
    \includegraphics[width=\linewidth]{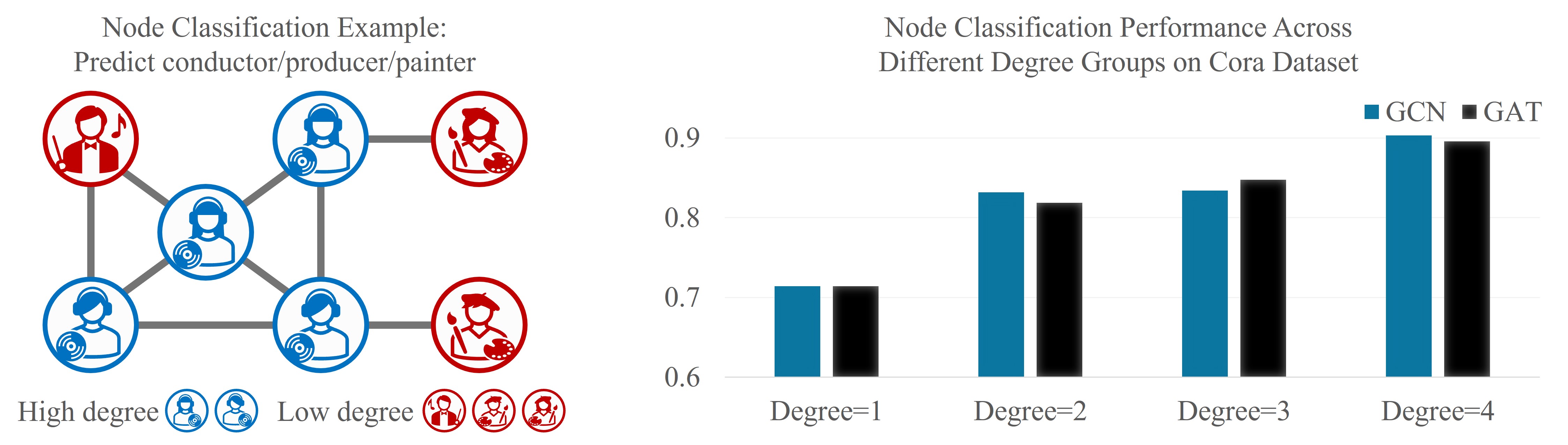}
    \caption{(Left.) The degree bias example in node classification: GNNs can perform worse on low-degree nodes (e.g., painter, conductor) than on high-degree nodes (e.g., producer) due to insufficient or noisy information from neighboring nodes for low-degree nodes. (Right.) Empirical results of degree bias. The different prediction performances across different degree groups on Cora: Low-degree nodes tend to have lower utility performance compared to high-degree nodes. }
    \label{fig:enter-label}
\end{figure*}

Recent studies~\cite{wang2022uncovering,kose2022fair,ling2023learning} have found that applying Graph Contrastive Learning (GCL) can help achieve fair representation learning.
GCL~\cite{zhu2020deep,zhu2021graph} combines GNNs with contrastive learning~\cite{chen2020simple,he2020momentum}. 
The typical GCL framework first generates two augmented views based on the transformation to the features and edges of the node, then defines a contrastive loss function to learn node representations from the generated graph views. GCL can mitigate degree bias to some extent by reducing the model's reliance on neighbourhoods by introducing augmented views.
The contrastive loss in GCL pulls positives closer while pushing negatives apart. 
For each node, the commonly adopted contrastive loss considers only the two augmented views as positives, and all other nodes as negatives. 

This assumption often leads to insufficient positive pair sampling for low-degree nodes which disadvantages them. 
One positive pair is insufficient to adequately support low-degree nodes, and degree bias still persists.
Supervised contrastive learning loss (SCL) \cite{khosla2020supervised} provides one solution by introducing more positive pairs for low-degree nodes. 
SCL treats all nodes of the same class as positives and all the other nodes as negatives, and assigns equal weights to every node.
However, as the example shown in Figure 1, different nodes have different levels of learning hardness:
Some are more informative and help form a clearer embedding space, whereas the others can contain noisier information to adversarially affect the latent space structure learning. Assigning equal weights to noisy nodes may prevent the model from capturing more important information.

In this paper, we propose a new contrastive loss that (1) includes more positives and (2) assigns different weights to positives and negatives to focus more on those with important information and less on the others with noisy information, thereby mitigating degree bias. We first apply SCL to ensure sufficient positives. Then, we design \textit{learning hardness} to quantify how hard each node is to learn.
For each node, both its augmented nodes and nodes from different classes with which the node shares high similarity are treated as hard-to-learn nodes.
We propose Hardness Adaptive Reweighted (HAR) contrastive loss that adaptively assigns higher weights to hard-to-learn nodes and lower weights to easier ones, ensuring the model focuses on the hard-to-learn cases.
Providing additional and more accurate information for low-degree nodes, which originally has insufficient information, helps accelerate their representation learning and reduce degree bias.

In addition, we extend our HAR loss to the semi-supervised GCL scenario to validate its adaptivity in real-world applications where not all samples are labelled. Specifically, we separate the training data into two subsets—the labelled and unlabelled parts—to simulate the scenario where part of the training data is unlabelled. 
Our proposed \underline{S}emi-supervised \underline{H}ardness \underline{A}daptive \underline{R}eweighted with \underline{P}seudo-labels (SHARP) consists of two steps. First, the model is pre-trained using the labelled training data. Second, the pre-trained model generates pseudo-labels for the unlabelled training data. We then use the pseudo-labelled training data to further fine-tune the model. Both pre-training and fine-tuning follow the GCL workflow. 
Overall, the main contributions of the paper are as follows:
    
\begin{itemize}
    \item \textbf{Hardness Adaptive Reweighted (HAR)} contrastive loss is proposed to address the degree bias. It provides more accurate positives and negatives for low-degree nodes, and assigns them adaptive weights. It enables the model to focus more on important information and less on noisy information. Our results show that low-degree nodes benefit more from adaptive weighting.
    \item \textbf{Empirical and Theoretical Analysis} We extend the application of HAR to the semi-supervised setting and propose the SHARP framework to validate HAR's adaptivity in broader scenarios. Our theoretical analysis also proves that (1) our proposed HAR loss is bounded and (2) even under the worst-case hyperparameter settings, the misclassification risk of SHARP also remains bounded.
    \item \textbf{Extensive Experiments} are performed across four datasets using two different GNN encoders, comparing our approach with four baselines. 
    The results show SHARP's effective degree bias mitigation without adversally affecting the overall performance.
    Globally, our SHARP can achieve comparative prediction performance with fewer labelled samples (even with 50\% fewer). Visualization of the latent feature space also validates that SHARP can produce clearer class boundaries. At the node degree level, SHARP shows particular enhancement in low-degree groups. 
\end{itemize}   

\section{Related Work}
\subsection{Graph Representation Learning}
Graph representation learning embeds graph-structured data into a feature space. The learned graph representations are generally integrated with task-specific neural networks to perform downstream tasks at various levels: graph-level, link-level, or node-level. This paper focuses on node-level tasks, which aim to predict the label of each node in a graph. GNNs and GCL are two common ways of generating embeddings for node classification tasks.

\textbf{Graph Neural Networks (GNNs)} are proposed to extend neural networks to process graph-structured data which has nodes features and connections between nodes. The core idea of GNNs is message passing between nodes, updating feature values by considering connected nodes layer by layer. Graph layers (GLs) define how nodes send, receive, and update messages. Convolution-based (e.g., GCN \cite{kipf2016semi}), attention-based (e.g., GAT \cite{velivckovic2017graph}), and aggregation-based (e.g., GraphSAGE \cite{hamilton2017inductive}) are three common types of graph layers~\cite{bronstein2021geometric}.

\textbf{Graph Contrastive Learning (GCL)} combines GNNs with Contrastive Learning (CL) which enhances the robustness of learned representations. CL is a learning scheme that focuses on pulling similar/positive samples closer while pushing dissimilar/negative samples away using a contrastive loss. 
Despite the abundance of contrastive loss variations \cite{chen2020simple,assran2020supervision,khosla2020supervised}, most GCL methods still use the common contrastive loss: it treats two augmented node originated from the same node as the positive pair, while treating all others as equally important negatives. 
This can make it hard to handle degree bias as we will discuss in detail below.


\begin{figure}[t]
    \centering
    \includegraphics[width=\linewidth]{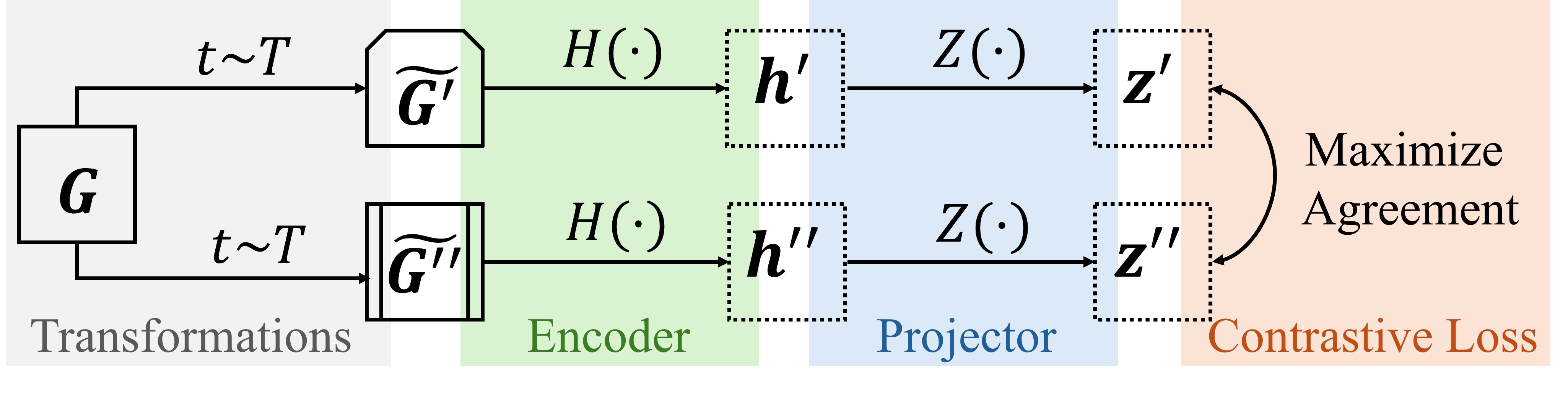}
    \caption{The Workflow of Contrastive Learning}
    \label{fig:workflow_simCLR_}
\end{figure}

\subsection{Degree Bias in Node Classification}

Based on graph connectivity, nodes can be split into low-degree and high-degree groups. \cite{zhao2024dahgn,subramonian2024theoretical} show that GNNs tend to have higher utility for high-degree nodes than low-degree nodes. This difference in prediction performance across nodes with different degrees is referred to as degree bias. \cite{subramonian2024theoretical} provides a detailed theoretical analysis and summarizes common hypotheses of degree bias origins, which the most common  hypotheses is that neighborhoods of low-degree nodes contain insufficient and noisy information \cite{feng2021should,xiao2021learning}.

\noindent
\textbf{Bias Mitigation Methods}
Many GNN-based degree bias mitigation methods \cite{yun2022lte4g,kang2021fair,liang2023tackling} have been proposed. For example, \cite{liu2021tail} introduced the concept of transferable neighborhood translation, which can transfer information from high-degree nodes to low-degree nodes to enhance their representations. \cite{han2024marginal} proposes a hop-aware attentive information aggregation scheme to expand the neighborhood for marginal nodes, thereby achieving structural fairness.

Recent work \cite{wang2022uncovering} found that GCL exhibits better structural fairness and less degree bias, compared with pure GNNs. 
GRACE \cite{zhu2020deep} applies GCL to node classification by generating contrastive views through the random masking of nodes and edges.
To further mitigate degree bias, many GCL-based mitigation methods have been proposed.  
GRADE \cite{wang2022uncovering} proposed a pre-defined graph augmentation pattern. Specifically, it follows the process of GRACE with a modified random edge-dropping scheme by deliberately dropping edges for high-degree nodes while adding edges for low-degree nodes. \cite{zhang2023understanding} applies random graph coarsening to data augmentation in GCL and mitigates degree bias by contrasting the coarsened graph with the original graph. \cite{kose2022fair} introduces adaptive graph augmentation to make GCL fairness-aware. Furthermore, \cite{zhao2024dahgn} performs GCL on heterogeneous graphs to mitigate degree bias.

These GCL mitigation methods generally tackle degree bias by introducing optimized data augmentation in the pre-processing stage, while still using the common contrastive loss. 
However, the common contrastive loss has two underlying limitations: (1) It considers only one positive pair per node. For low-degree nodes with few connections, this restricts their information gathering. (2) It assumes all positives and negatives contribute equally to the loss function. However, some of them can contain more important information and are harder to learn in practice.

To address these issues and mitigate degree bias, we propose a new contrastive loss that (1) is specifically designed to identify positive and negative pairs based on label information, allowing low-degree nodes to learn more information, and (2) assigns adaptive weights based on learning hardness to provide tailored information, enabling low-degree nodes to receive less noise and more focused learning.

\section{Methodology}
We begin with a brief introduction to the basics of graph contrastive learning. This is followed by a detailed description of our proposed Hardness Adaptive Reweighted (HAR) loss, which helps mitigate degree bias. Last, we describe an experimental pipeline named SHARP that extends HAR loss across broader scenarios for enhanced robustness.

\subsection{Preliminary}
The graph data is denoted as \(G = (V, E)\), where \(V\) is a set of $N$ nodes, and \(E\) is the set of edges.  
In the node classification task, each node $v \in V$ has an associated label $y$ and a $M$-length feature vector $x \in X$. Each edge \(e \in E\) refers to the nodes connection pair, denoted as \(e = (v_i, v_j)\), where \(i,j \in \{1,2,..., N\}\). 
We define $X_{N \times M}$ as the feature matrix of $V$, $Y_{N \times 1}$ as the labels set, and $A_{N \times N}$ as the adjacency matrix where \(A_{ij} = 1\) if \((v_i, v_j) \in E\) or \(i = j\), and 0 otherwise.
The degree of $v$ is the number of edges connected to it.

As shown in Figure \ref{fig:workflow_simCLR_}, graph contrastive learning methods generally follow a similar workflow \cite{chen2020simple,zhu2020deep,wang2022uncovering}.
First, two augmented views, $\tilde{G}'$ and $\tilde{G}''$, are generated from the given graph $G$ using transformation techniques like perturbation, removal, noise addition, etc.
We follow the transformation practice in ~\cite{zhu2020deep,srivastava2014dropout} to perform random masking on both edges and node features to form $\tilde{G}'$ and $\tilde{G}''$.

\begin{figure*}[t]
    \centering
    \includegraphics[width=1\linewidth]{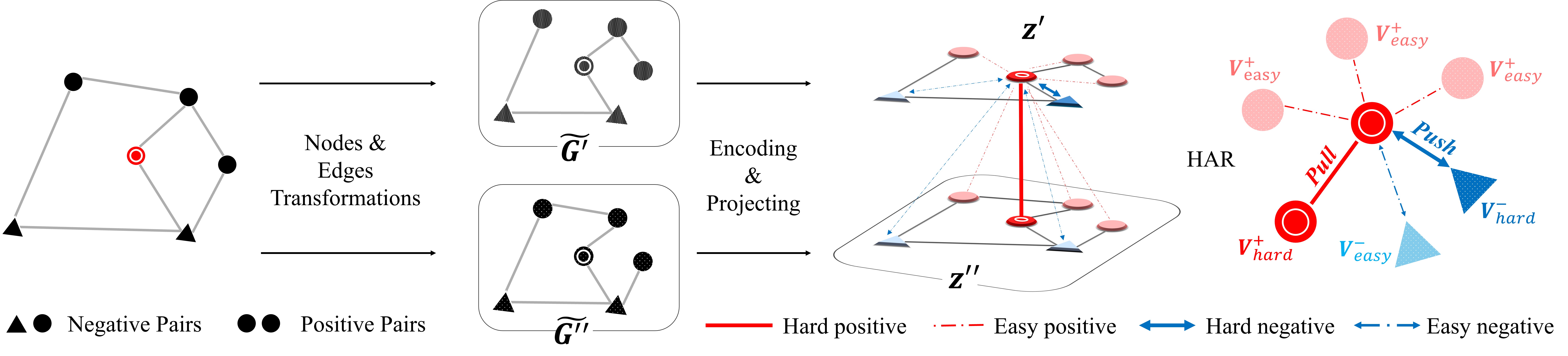}
    \caption{The Illustration of Graph Contrastive Learning with Our Proposed HAR Loss}
    \label{fig:flow_detailed}
\end{figure*}

\begin{itemize}
\item \textbf{Masking Edges (ME)}
Some edges are randomly removed from the original graph with an edge dropping rate $P_{e}$. We generate a random masking matrix $\tilde{R}_{N \times N} \in \{0,1\}$, where each entry is sampled from a Bernoulli distribution \(\tilde{R}_{ij} \sim \mathcal{B}(1 - p_{e})\) where $\tilde{R}_{ij} = 1$ if \(A_{ij} = 1\) in the original graph, and \(\tilde{R}_{ij} = 0\) otherwise. The new adjacency matrix \(\tilde{A}\) is obtained by performing the Hadamard product of \(A\) and \(\tilde{R}\), denoted as $ \tilde{A} = A \circ \tilde{R}$.

\item \textbf{Masking Features (MF)}
Similarly, features are randomly masked with a feature dropping rate $p_{f}$.
Each dimension of the features masking vector $\tilde{\textbf{r}} \in \{0,1\}^{M}$ is independently sampled following a Bernoulli distribution with probability $(1 - p_f)$. Each  node features $\tilde{x_i}$ are calculated by Hadamard product between $\tilde{\mathbf{r}}$ and each node feature vector $x_i$ where $i \in \{1,2, \ldots N\}$. Then the masked feature matrix is obtained by concatenating the masked feature vectors, denoted as $\tilde{X} = [x_1 \circ \tilde{\mathbf{r}}; x_2 \circ \tilde{\mathbf{r}}; \cdots; x_N \circ \tilde{\mathbf{r}}]^\top$.
\end{itemize}

Then, an encoder $H(\cdot)$ is used to produce data embeddings $h'$ and $h''$ for the augmented views. A projection head $Z(\cdot)$ further maps embeddings $h'$ and $h''$ into the space $z'$ and $z''$. Here we set GNNs as the encoder $H(\cdot)$ and MLP as the projector $Z(\cdot)$. Last, the contrastive loss is calculated from $z'$ and $z''$. In each training epoch, parameters are updated to minimize contrastive loss.

\begin{table*}[t]
\small
\centering
\caption{Hard/Easy and Positive/Negative Node Definitions for Each Given Node $v_i$. 
}
\renewcommand{\arraystretch}{1.2} 
\resizebox{0.95\textwidth}{!}{
\begin{tabular}{ll}
\hline
\textbf{Categories} & \textbf{Descriptions} \\
\hline
Hard Negative Nodes $\textbf{v}^-_{hard}$& Nodes with Different Labels $(y_i \neq y_j)$ but High $f_{\tau}(z_i,z_j)$ \\
\hline
Easy Negative Nodes $\textbf{v}^-_{easy}$ & Nodes with  Different Labels $(y_i \neq y_j)$ and Low $f_{\tau}(z_i,z_j)$\\
\hline
Hard Positive Nodes $\textbf{v}^+_{hard}$ &  Nodes with Same Label $(y_i = y_j)$ and Same Origin \\
\hline
Easy Positive Nodes $\textbf{v}^+_{easy}$& Nodes with Same Label $(y_i = y_j)$ but Different Origins\\
\hline
\end{tabular}
}
\label{tab:def_samples}
\end{table*}

\subsection{The Proposed Hardness Adaptive Reweighted Contrastive Loss}
\label{sec:method-gcl}

Figure \ref{fig:flow_detailed} shows the workflow of our proposed \underline{H}ardness \underline{A}daptive \underline{R}eweighted (HAR) contrastive loss. 
Our HAR adaptively assigns weights to each node’s positives and negatives based on nodes learning hardness. This novel weighting strategy can effectively address degree bias issue.
We first present our definitions of positives, negatives, and learning hardness, then describe HAR's adaptive weighting mechanism.

For a given node $v_i$, its relationship with other nodes $v_j$ $(j \in \{1,2,...,N\})$ can be categoriesed into hard negatives $\textbf{v}^-_{hard}$, easy negatives $\textbf{v}^-_{easy}$, hard positives $\textbf{v}^+_{hard}$, and easy positives $\textbf{v}^+_{easy}$, as defines in Table~\ref{tab:def_samples}. 
Nodes are treated as positives $\textbf{v}^+$ if they have the same label, and as negatives $\textbf{v}^-$ otherwise. 
We define the \textit{learning hardness} of $\textbf{v}^+$ based on augmentation origin\footnote{For any augmented node $v'_a \in \tilde{G}'$, its augmentation origin is defined as the corresponding original node $v_a \in G$. Two augmented nodes (eg. $v'_a \in \tilde{G}', v''_b \in \tilde{G}''$) have the same origin if their augmentation origins are the same $(v_a = v_b)$, and have different origins otherwise $(v_a \neq v_b)$.}, while that of $\textbf{v}^-$  on similarity between their representations $f_{\tau}(z_i, z_j)$ as defined in Equation (\ref{eq:ft_sim}). 
Intuitively, hard-to-learn nodes include nodes with the same origin (hard positives) and those highly similar but in different classes (hard negatives).

\begin{equation}
f_{\tau}(z_i, z_j)=\exp(\text{sim}(z_i, z_j)/\tau),
\label{eq:ft_sim}
\end{equation}

\begin{equation}
\text{sim}(z_i, z_j)= \frac{z_i \cdot z_j}{\|z_i\| \|z_j\|}.
\end{equation}

Based on the definitions, we can form Equation~\eqref{eq:mask_pos_neg} to obtain the overall positive mask matrix $\text{Mask}^+$ and negative mask matrix $\text{Mask}^-$ by identifying the positive and negative samples for each node. 
$\mathbb{I}$ is an indicator function, where $\mathbb{I}_{(y_i = y_j)}$ takes the value 1 iff $y_i$ and $y_j$ are the same, and 0 otherwise. Similar notation applies to $\mathbb{I}_{(y_i \neq y_j)}$.

{\small
\begin{equation}
\label{eq:mask_pos_neg}
\begin{split}
\mathrm{Mask}^+ = [\mathbb{I}_{(y_i = y_j)}]_{N \times N},
\mathrm{Mask}^- = [\mathbb{I}_{(y_i \neq y_j)}]_{N \times N}
\end{split}
\end{equation}
}

Equation~\eqref{eq:sim_intra_inter} defines the similarity matrix \( S \) to incorporate association information between nodes from two sources: \textbf{intra-view} associations (within the embedding representation from the same view, such as within \( z' \) or \( z'' \)) and \textbf{inter-view} associations (across different views, like between \( z' \) and \( z'' \)).

\begin{equation}
\label{eq:sim_intra_inter}
\begin{split}
S=[ s_{i,j} ]_{N \times N}, s_{i,j}= \underbrace{f_{\tau}(z'_i,z'_{j})}_{\text{inter-view}} + \underbrace{f_{\tau}(z'_i,z''_{j})}_{\text{intra-view}}
\end{split}
\end{equation}

\begin{figure*}[t]
    \centering
    \includegraphics[width=1\linewidth]{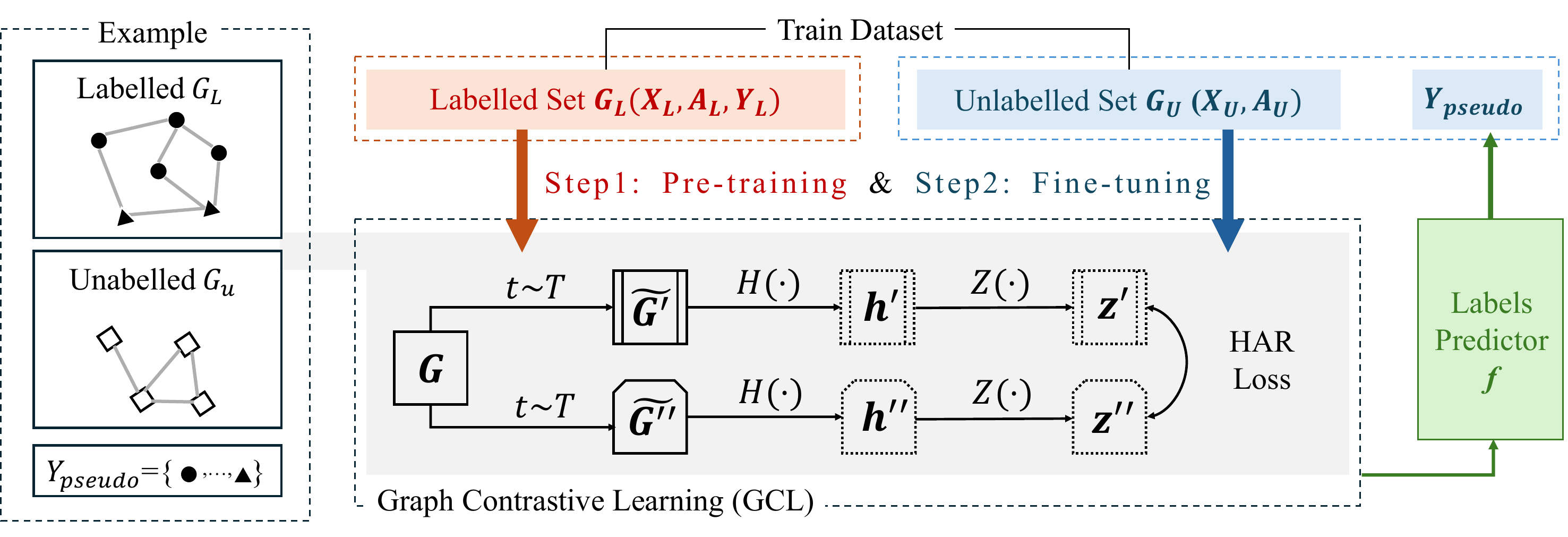}
    \caption{The Two-Step SHARP with HAR Loss on Node Classification Tasks}
    \label{fig:flow_all}
\end{figure*}

The corresponding similarity matrices for positive samples $S^+$ and negative samples $S^-$ can be formed by Equation~\eqref{eq:sim_pos_neg}, where $\circ$ denotes the Hadamard product. \( S^+_{i,j} \) and \( S^-_{i,j} \) means the values of \( S^+ \) and \( S^- \) at \( (i, j) \).
\begin{equation}
\label{eq:sim_pos_neg}
    S^+=S \circ \mathrm{Mask}^{+}, \space S^-=S \circ \text{Mask}^{-}.
\end{equation}

For positive pairs, we aim to encourage the model to focus more on learning the representation of the sample itself $\textbf{v}^+_{hard}$, while also considering information from other positive samples $\textbf{v}^+_{easy}$.
Therefore, we first scale $S^+$ to assign adaptive weights to all positive samples, then add extra weights to $\textbf{v}^+_{hard}$ via a $N\times N$ identity matrix $\mathbf{I}_{N}$. The reweighted positives are defined by Equation~\eqref{eq:pos_reweighted}.

\begin{equation}
W^{+}=\alpha \cdot \bar{S}^{+} + \mathbf{I}_{N},  
\end{equation}

{\small
\begin{equation}
\bar{S}^+ = [\bar{s}_{ij}]_{N \times N}, 
\bar{s}_{ij} = \frac{\max(S^+) - S_{ij}^+}{\max(S^+) - \min(S^+)}.
\end{equation}
}

\begin{equation}
\label{eq:pos_reweighted}
\mathrm{POS}_{i} = \sum\nolimits_{j=1}^{N} (  W^+_{ij} \cdot S^+_{ij})
\end{equation}

For negative samples, we refer to CL in debiasing \cite{chuang2020debiased} and reweighting \cite{kalantidis2020hard} strategies which assigns higher weights to $\textbf{v}^-_{hard}$ and lower weights to $\textbf{v}^-_{easy}$.

The reweighted negatives are defined by Equation~\eqref{eq:neg_reweighted}, where $\beta$ is the scaling term; $\tau^+$ is a class probability and is calculated as the reciprocal of the number of classes;
and $Q_{i}$ represents the ratio between positive and negative sample counts for node $v_i$, defined as $\frac{CNT_{i}({\mathrm{Mask}^-})}{CNT_{i}({\mathrm{Mask}^+})}$.
\begin{align*}
W^{-}=[ w^{-}_{ij} ]_{N \times N}, w^{-}_{i,j} = \frac{\beta \cdot  S^{-}_{ij} }{\frac{1}{N} \sum_{j=1}^{N}  S^{-}_{ij}}.
\end{align*}

{\small
\begin{equation}
\label{eq:neg_reweighted}
\mathrm{NEG}_{i} = 
\max (
\frac{ \sum_{j=1}^{N} (- Q_{i} \cdot \tau^{+} \cdot S^+ + W^- \cdot S^-) }{1 - \tau^{+}}, e^{-\frac{1}{\tau}} ) 
\end{equation}
}

After gaining the new weighted similarities for both positives and negatives, the loss for node $v_i$ is defined as Equation~\eqref{eq:ell_loss}.
\begin{equation}\label{eq:ell_loss}
\ell_{i}(z',z'')=-\log \frac{\mathrm{POS}_{i}  }
{
\mathrm{POS}_{i} +\mathrm{NEG}_{i} 
}
\end{equation}

Given the symmetry between the two contrasting perspectives, the overall HAR contrastive loss for the input graph data $G$ is designed as follows.
\begin{equation}\label{eq:HAR_loss}
HAR = \frac{1}{2N} \sum_{i=1}^N \left[ \ell_{i}(z',z'') + \ell_{i}(z'',z') \right]
\end{equation}

\textbf{Theoretical Analysis} We present the theoretical analysis of two hyper-parameters associated with the distributions of positive and negative samples based on previous analyses ~\cite{robinson2020contrastive}. 
Our findings show that even under worst-case hyperparameter settings, the misclassification risk of SHARP remains bounded. Specifically, SHARP degenerates to the case in \cite{robinson2020contrastive} when \(\alpha=0\), and for a fixed \(\beta\), we observe that the upper bound of the loss function is reached due to the monotonicity property of the logarithm function.
Furthermore, HAR loss does not exceed that of the aforementioned fully unsupervised objective function. The details are provided in Appendix \ref{sec:theoretical}.

\subsection{The Extension: Two-step SHARP Framework}
\label{sec:method-two-step}
To validate the effectiveness of our HAR loss in real-world cases where labels may not always be available, we design a two-step training framework that extends the application of our HAR loss to the semi-supervised scenario.
We call the proposed framework as SHARP (\underline{S}emi-supervised \underline{H}ardness \underline{A}daptive \underline{R}eweighted with \underline{P}seudo-labels) as shown in Figure~\ref{fig:flow_all}. 

Specifically, we split training data into two sets $G_{\text{train}} = G_{\text{train}}^{L} \cup G_{\text{train}}^{U}$ where labelled train set is denoted as $G_{\text{train}}^{L} = \{X_L, A_L, Y_L\}$ and unlabelled data is denoted as $G_{\text{train}}^{U} = \{X_U, A_U, \emptyset\}$.
The overall training process consists of two steps: First, the GCL is applied to the labelled data $G_{\text{train}}^{L}$ to generate a pre-trained model. Second, the pre-trained model is used to generate the pseudo-labels for $G_{\text{train}}^{U} $.
The new $G_{\text{train}}^{U} = \{ X_{U}, A_{U},Y_{pseudo} \}$ is then fed into the GCL for further fine-tuning the pre-trained model. Both steps are in the same GCL workflow with HAR loss, using their respective datasets.

\section{Experiments and Results}
This section first describes the experimental setup, followed by a global performance comparison and a detailed analysis across degree groups. We last present our parameter sensitivity analysis and dimension reduction visualization to confirm the effectiveness of our method, supporting the validity of the results.

\subsection{Experiment Settings} 

\textbf{Datasets} The experiments are run on four datasets with node classification tasks: Cora \cite{mccallum2000automating}, CiteSeer \cite{giles1998citeseer}, PubMed \cite{sen2008collective}, and Wiki-CS \cite{mernyei2020wiki}.
The first three datasets are paper citation networks on different domains, Wiki-CS contains Wikipedia articles in computer science topics, and all aiming to predict specific topic of each node.
\begin{table}[http]
  \centering
  \caption{Dataset Statistics}
  \label{tab:table1}
  \resizebox{0.5\textwidth}{!}{
  \begin{tabular}{cccccc}
    \hline
    Dataset & Type & Nodes & Edges & Classes & Features \\
    \hline
        Cora & Citation & 2,708 & 5,429 & 7 & 1,433 \\
    Citeseer & Citation & 3,327 & 4,732 & 6 & 3,703 \\
    Pubmed & Citation & 19,717 & 44,338 & 3 & 500 \\
    Wiki-CS & Wikipedia & 11,701 & 216,123 & 10 & 300 \\
    \hline
  \end{tabular}}
\end{table}

\textbf{Graph Contrastive Learning Settings} Our graph learning processing is based on PyG package \cite{Fey/Lenssen/2019}.
We run experiments on both two-layer GCN and GAT as the encoder, the projector is set to be a two-layer Multilayer Perceptron (MLP). 
The contrastive loss is implemented as the proposed HAR. 
The parameters setting like edge dropping rate $p_e$, feature dropping rate $p_f$, temperature $\tau$, etc., are listed in the Appendix \ref{apx:experiment}. 

\begin{table*}[t]
\caption{Prediction Performance Comparison across Different Models on F1-Score}
\resizebox{1\textwidth}{!}{
\begin{tabular}{ccccc|ccccc}
\toprule
\multicolumn{5}{c}{GCN as Graph Layers}         & \multicolumn{5}{c}{GAT as Graph Layers}           \\
\midrule
\rowcolor[HTML]{EFEFEF}  $r=0$  & Cora   & CiteSeer & PubMed & Wiki-CS  & $r=0$   & Cora   & CiteSeer & PubMed & Wiki-CS  \\
GCN       & 0.8340 & 0.7140   & 0.8630 & 0.8116   & GAT       & 0.8345 & 0.7230   & 0.8690 & 0.8214   \\
GRACE     & 0.7992 & 0.6914   & 0.8392 & 0.7487 & GRACE     & 0.7928 & 0.6790    & 0.8388 & 0.7568 \\
SCL       & 0.8176 & 0.7202   & 0.8926 & 0.8027   & SCL       & 0.8415 & 0.7218   & 0.8814 & 0.8070   \\
Debias    & 0.8152 & 0.7226   & 0.8906 & 0.8062   & Debias    & 0.8465 & 0.7196   & 0.8837 & 0.8095   \\
\midrule
SHARP(Ours) & \textbf{0.8700} & \textbf{0.7252}   & \textbf{0.8942} & \textbf{0.8193}   & SHARP(Ours) & \textbf{0.8770} & \textbf{0.726}    & \textbf{0.8931} & \textbf{0.8235}  \\
\midrule
\rowcolor[HTML]{EFEFEF} $r=0.3$  & Cora   & CiteSeer & PubMed          & Wiki-CS & $r=0.3$  & Cora   & CiteSeer & PubMed          & Wiki-CS \\ 
GCN    & 0.8170 & 0.7120   & 0.8740          & 0.8078  & GAT    & 0.8260 & 0.7090   & 0.8640          & 0.7975  \\
GRACE  & 0.7839 & 0.6938   & 0.8354          & 0.7499  & GRACE  & 0.7948 & 0.6976   & 0.8410          & 0.7513  \\
SCL    & 0.7804 & 0.7117   & 0.8860          & 0.8002  & SCL    & 0.8318 & 0.7188   & \textbf{0.8830} & 0.8017  \\
Debias & 0.7846 & 0.7100   & \textbf{0.8872} & 0.8011  & Debias & 0.8254 & 0.7224   & 0.8814          & 0.8007  \\ 
\midrule
SHARP(Ours) & \textbf{0.8586} & \textbf{0.7212} & 0.8856 & \textbf{0.8150} & SHARP(Ours) & \textbf{0.8667} & \textbf{0.7268} & 0.8809 & \textbf{0.8191} \\ 
\bottomrule
\end{tabular}
}
\label{tab:global}
\end{table*}

\textbf{Two-step Learning Settings}
We follow \cite{sen2008collective,kipf2016semi} to split the data into train $G_{\text{train}}$, validation $G_{\text{val}}$ and test $G_{\text{test}}$ datasets. 
Then, $G_{\text{train}}$ is splitted into two parts: a fraction $r \in[0,1)$ forms the unlabelled sub-training set $G_{\text{train}}^\text{U}$, while the remainder forms the labelled sub-training set $G_{\text{train}}^\text{L}$.
First, the model is pre-trained with $G_{\text{train}}^\text{L}$ for $epoch$ rounds. In the following model fine-tuning on the $G_{\text{train}}^\text{U}$, we validate the prediction performance on the validation set $G_{\text{val}}$ for each round and record the best history result. When the best result is not updated for \( K \) rounds, we stop the training and save the encoder. The model evaluation is based on embeddings of $G_{test}$ generated from this saved encoder.

\textbf{Baselines}
One GNN-based and three GCL-based methods are set as baselines. All GCL-based methods follow the same GCL workflow and with different contrastive losses (GRACE \cite{zhu2020deep}, SCL \cite{khosla2020supervised}, Debias \cite{chuang2020debiased}). The final embeddings are then fed into a logistic regression classifier for performance evaluation.
To ensure results comparability, all baselines follow the same two-step semi-supervised learning: pre-training their respective initial performance on the $G_{\text{train}}^\text{L}$, and fine-tuning on the $G_{\text{train}}^\text{U}$.

\textbf{Evaluation} 
All evaluations are under the test set $G_{\text{test}}$ using F1-score as the metric. Each model is run 10 times, and we report the mean scores.
F1-score is set as our evaluation metric as it is derived from Precision and Recall—which in turn are calculated based on the values of True Positives (TP), False Positives (FP), False Negatives (FN), and True Negatives (TN) in the confusion matrix—providing a more balanced measure of performance.
We perform both global and degree-level evaluations: 
(1) The global evaluation computes performance across the entire test set. 
(2) For degree-level evaluation, we group test data by node degrees and compute F1-score within each degree group.
The degree bias is assessed by comparing performance disparities among degree groups.

\subsection{Result Discussions}
The results show SHARP’s performance at the global and degree levels. The robustness of SHARP is examined through experiments with different hyperparameter combinations and latent embedding visualizations.

\subsubsection{Global Level Performance}

Table \ref{tab:global} illustrates the global F1-score performance of five models across four datasets.
Compared to GRACE, which only treats one single positive pair, SCL, Debias, and SHARP all utilize labels to customize positives and negatives for each node, and their overall performance is better than GRACE.
Our SHARP outperforms the baseline in 14 out of 16 tests and shows most significant improvement on the Cora dataset, with a 3.6\% increase in performance under the GCN layer ($r=0$) and a 4.2\% increase under the GAT layer ($r=0$) setting. 
SHARP performs slightly lower than SCL and Debias on the PubMed dataset ($r=0.3$). This can be due to our current simple semi-supervised scheme for all baselines, which applies a fixed-step early stopping strategy during the second-step fine-tuning process, potentially introducing fluctuations to the results. We focus on how GCL settings benefits graph structural fairness in this paper, and leave the discussion of different fine-tuning strategies in semi-supervised learning for future work.

\begin{figure}[b]
    \centering
    \includegraphics[width=1\linewidth]{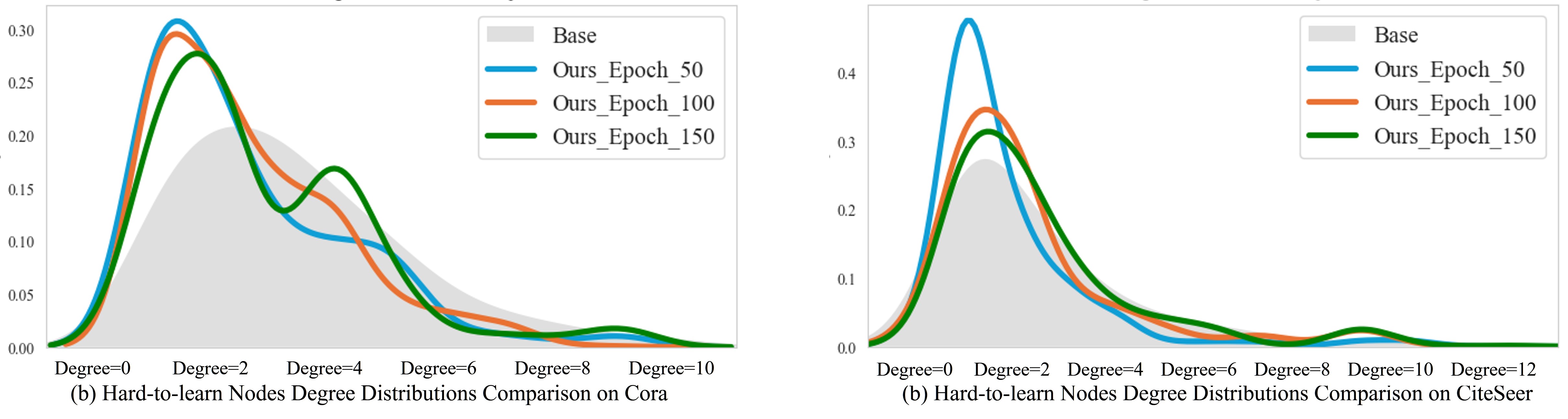}
    \caption{The Node Degree Distributions of Top-5 Hard Negatives (GAT layers, $r=0$ )} 
    \label{fig:hard-to-lean}
\end{figure}

\subsubsection{Degree Level Performance}
Besides overall performance, we also analyze degree bias by comparing performance across degree groups. Small sample sizes in some very high-degree groups can reduce performance confidence. Thereby we focus on nodes with degrees 0 to 7 to ensure sufficient samples within degree groups.
Figure \ref{fig:ablation}(a) illustrates degree-level performance on the Cora dataset ($r=0.3$). Among all methods, SHARP shows the most significant improvement for low-degree nodes, achieving over 10\% increase for the node group (degree=1) compared to the GAT baseline.
Comparing growth rates across subgroups, SHARP enhancement for low-degree nodes is higher than high-degree group.
Similar trends are also in PubMed datasets (Table \ref{tab:degree_pubmed_gat}). Despite limited global improvement discussed above, subgroup analysis shows SHARP focuses on boosting scores for low-degree nodes, narrowing the performance gap with high-degree nodes.
Therefore, we conclude our method shows a targeted improvement, particularly benefiting low-degree nodes across datasets, addressing the uneven performance on different node degree groups.

\begin{table*}[t]
\centering
\caption{Degree-level Prediction Performance Comparison on PubMed Dataset (GAT as the graph layers, $r=0$, $\Delta$ indicates changes compared with the GAT baseline)}
\resizebox{1\textwidth}{!}{
\begin{tabular}{cc|c|cc|cc|cc|cc}
\toprule
Degree & Size & GAT & GRACE & $\Delta$ & SCL & $\Delta$ & Debias & $\Delta$  & SHARP(Ours) & $\Delta$ \\
\midrule
1 & 483 & 0.8364 & 0.8023 & -3.41\% & 0.8553 & 1.89\%          & 0.8553 & 1.89\%          & 0.8706 & \textbf{3.42\%} \\
2 & 153 & 0.8824 & 0.8471 & -3.53\% & 0.9000 & 1.76\%          & 0.9007 & 1.83\%          & 0.9190 & \textbf{3.66\%} \\
3 & 79  & 0.9367 & 0.9051 & -3.16\% & 0.9519 & 1.52\%          & 0.9481 & 1.14\%          & 0.9519 & \textbf{1.52\%} \\
4 & 40  & 0.9250 & 0.8600 & -6.50\% & 0.9500 & 2.50\%          & 0.9500 & 2.50\%          & 0.9750 & \textbf{5.00\%} \\
5 & 35  & 0.8857 & 0.8600 & -2.57\% & 0.9143 & \textbf{2.86\%} & 0.9229 & \textbf{3.72\%} & 0.8971 & 1.14\%          \\
6 & 28  & 0.8571 & 0.8750 & 1.79\%  & 0.8929 & \textbf{3.58\%} & 0.8929 & \textbf{3.58\%} & 0.8929 & \textbf{3.58\%} \\
7 & 14  & 0.9286 & 0.9357 & 0.71\%  & 1.0000 & \textbf{7.14\%} & 0.9714 & 4.28\%          & 0.9286 & 0.00\%
\\ \bottomrule
\end{tabular}
}
\label{tab:degree_pubmed_gat}
\end{table*}

\begin{figure*}[t]
    \centering
    \includegraphics[width=1\linewidth]{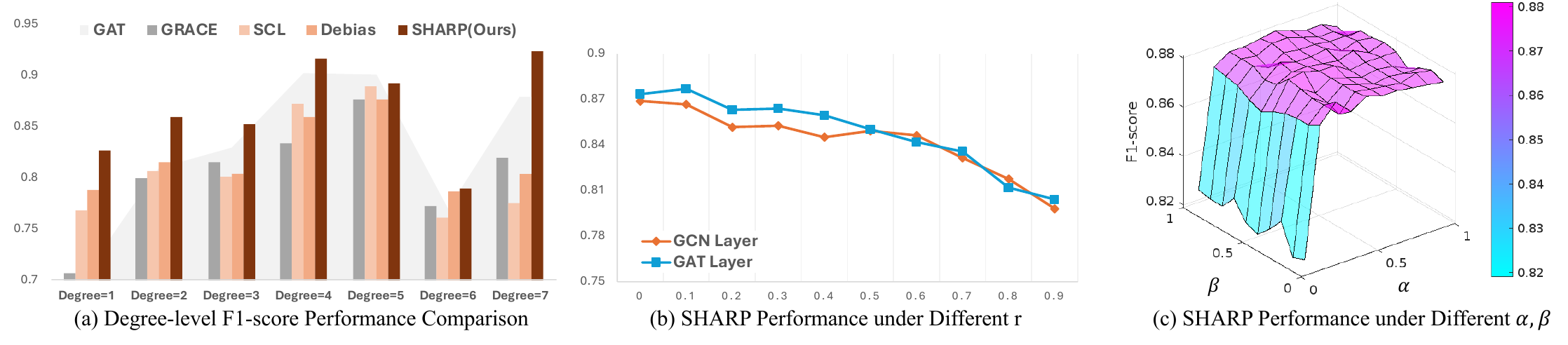}
    \caption{(a) Degree Bias Comparison; (b)(c) $r, \alpha, \beta$ Sensitivity Analysis on Cora Dataset}
\label{fig:ablation}
\end{figure*}

To further investigate why our method can reduce degree bias, we further analyze the degree distributions of hard-to-learn nodes. For each node, the top hard-to-learn nodes (those assigned higher weights) are identified. Then, the degrees of these selected nodes are recorded and their degree distributions are visualized using Kernel Density Estimation\footnote{The KDE plot smooths data with a Gaussian kernel for a continuous density estimate.}.

Figure \ref{fig:hard-to-lean} displays degree distribution comparisons of Top-5 hard negatives across different training epochs. The gray area represents the original distribution without adaptive weights, serving as our ablation study. 
The proportion of low-degree nodes in the hard-to-learn nodes distribution is higher than the base distribution. Assigning higher weights to hard-to-lean nodes puts more focus on low-degree nodes during training and helps mitigate degree bias. Traditional equal weighting fails to explore the full potential of low-degree nodes.

\begin{figure*}[t]
    \centering
    \includegraphics[width=1\linewidth]{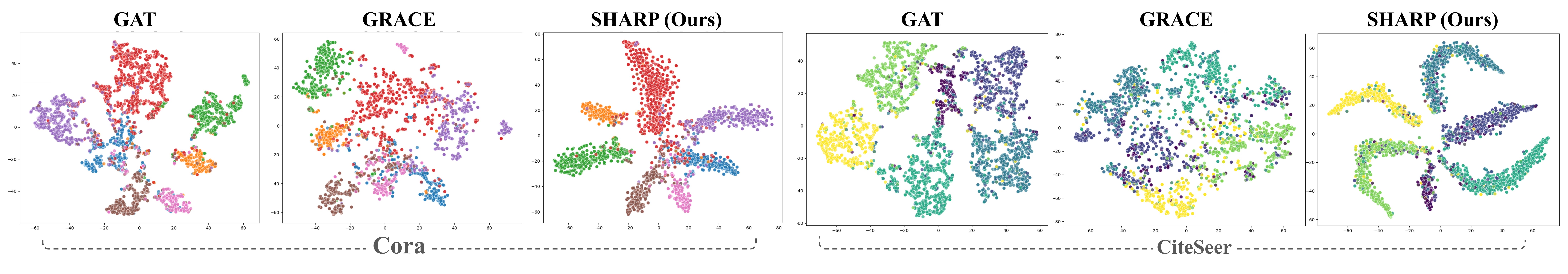}
        \caption{The t-SNE Visualizations of Embeddings on Cora and CiteSeer Dataset}
    \label{fig:vis_tsne}
\end{figure*}

\subsubsection{Parameters Sensitivity Analysis}
To assess parameter sensitivity, additional experiments are performed with different proportions of unlabelled data \( r \in [0,1) \), and the hyperparameters \( \alpha \in (0,1] \) and \( \beta \in (0,1] \). Figure \ref{fig:ablation}(b)(c) shows the comparison results on the Cora dataset.
The results are generally robust to the scaling parameters \(\alpha\) and \(\beta\), with performance varying by less than 2\% across different parameter combinations.
However, when \(\beta\) is particularly small (close to 0), a noticeable decline occurs. This implies that ignoring negative samples can cause the loss function to rely solely on positive samples, and can hinder effective contrastive learning.
The performance increases as $r$ decreases, which aligns with our intuition that more data labelled can provide more information that benefits learning. A lower $r$ indicates a higher proportion of labelled data in the first pre-training step, enabling the model to learn more feature patterns. Still, our model shows competitive performance despite of the performance drop with higher $r$ values.
Despite a performance drop with higher $r$ values, our model still shows competitive performance. Notably, when using SHARP (with GCN layers) with only half labelled data $(r=0.5)$, we still achieve an F1-score of 0.8504. This surpasses the performance of 0.834 which uses GCNs alone with the entire labelled dataset for training.

\subsubsection{Visualisations}

To further compare the structure of the models' learned representations, we apply t-SNE \cite{van2008visualizing} to map the high-dimensional embedding features to a low-dimensional space for visualization. Figure \ref{fig:vis_tsne} shows the visualisations (GAT Layer, $r=0.3$), with the full comparison in Appendix \ref{apx:experiment}. 
Each point represents the t-SNE processed latent space of the node embedding, with colors indicating different label categories.
SHARP displays the clearest decision boundaries of the seven categories in Cora and six categories in CiteSeer. For inter-class performance, SHARP pushes nodes from different classes further apart than other methods. It aligns with the principle of contrastive learning by distancing negative samples and validates the effectiveness of our designed HAR loss. For intra-class performance, SHARP also pulls same-class nodes more tightly together than other methods. It reduces degree bias by strengthening the connections of low-degree nodes to their class. We conclude that our SHARP can capture the underlying structure of the data more effectively. The superior interclass and intraclass performance further validates SHARP's effectiveness in degree bias mitigation.

\section{Conclusions}
The paper proposes a novel contrastive loss called HAR to mitigate degree bias in node classification.
HAR offers two main advantages: (1) it assigns more positives to each node, providing low-degree nodes with additional information; and (2) it assigns weights to positives and negatives adaptive to their learning hardness, allowing more focus on low-degree nodes and more information about them to be learned.

Both theoretical analysis and experiments validate HAR’s effectiveness. In the experiments, we design a two-step pipeline named SHARP to integrate and examine HAR’s applicability in more diverse scenarios.
Experiments on four datasets and comparisons with four baselines show that by focusing more on hard-to-learn nodes, the proposed SHARP can maintain good global performance while enhancing local performance for low-degree nodes, thereby reducing degree bias. 

One limitation of the current work is that when the proportion of labelled data is particularly low (e.g., less than 30\%), it can restrict the improvement of SHARP. One possible reason for this is that insufficient data can lead to a lack of feature information, which impacts graph representation learning. Future work could further explore using pre-trained models to add richer semantic information to low-degree nodes \cite{zhao2022ap,yan2023comprehensive}, as well as extend this approach to more diverse scenarios such as heterogeneous graphs.

\bibliography{anthology,a-ref}
\bibliographystyle{acl_natbib}

\appendix
\onecolumn


\section{Detailed Related Work of Graph Representation Learning }
This section provides a detailed discussion of related work.
Graph representation learning embeds graph-structured data into a feature space. The learned graph representations are generally integrated with task-specific neural networks to perform downstream tasks at various levels: graph-level, link-level, or node-level. This paper focuses on node-level tasks, which aim to predict the label of each node in a graph. GNNs and GCL are two common ways of generating embeddings for node classification tasks.

\noindent
\textbf{Graph Neural Networks (GNNs)} are proposed to extend neural networks to process graph-structured data which has nodes features and connections between nodes. The core idea of GNNs is message passing between nodes, updating feature values by considering connected nodes layer by layer. Graph layers (GLs) define how nodes send, receive, and update messages. Convolution-based (e.g., GCN \cite{kipf2016semi}), attention-based (e.g., GAT \cite{velivckovic2017graph}), and aggregation-based (e.g., GraphSAGE \cite{hamilton2017inductive}) are three common types of graph layers~\cite{bronstein2021geometric}.

GCN aggregates using constant-weighted averages of neighbor messages. 
The layer-wise propagation rule of the GCN is as follows, where \( \tilde{A} \) refers to the adjacency matrix $A$ plus self-loops, \( \tilde{D} \) is the degree matrix of \( \tilde{A} \), \( W^{(l)} \) is the trainable weight matrix, and \( H^{(l)} \) is the activation matrix at layer \( l \). \( H^{(0)} \) equals the feature matrix \( X \) at layer \( l=0 \).

\begin{equation}
H^{(l+1)} = \sigma\left(  \hat{A} H^{(l)} W^{(l)} \right), \text{where} \space \hat{A} = \tilde{D}^{-\frac{1}{2}}  \tilde{A} \tilde{D}^{-\frac{1}{2}} 
\end{equation}

Based on the idea of GCN, Graph Attention Network (GAT) adds attention mechanisms to learn the importance of neighboring nodes when aggregating feature information. GraphSAGE samples a subset of neighbors to reduce computational complexity for nodes with massive degrees.

\noindent
\textbf{Graph Contrastive Learning (GCL)} combines GNNs with Contrastive Learning (CL) which enhances the robustness of learned representations. CL is a learning scheme that focuses on pulling similar/positive samples closer while pushing dissimilar/negative samples away using a contrastive loss \cite{chen2020simple}. 
There are different contrastive loss definitions to ensure positive samples are brought together while negative samples are distanced effectively in GCL.
The classic SimCLR \cite{chen2020simple} extends the usage of InfoNCE \cite{oord2018representation} in contrastive learning, aiming to learn representations by maximizing agreement between differently augmented views of the same sample in the latent space. Supervised contrastive learning (SCL) \cite{khosla2020supervised} uses label information to more accurately identify positive samples as those from the same class, while negative samples are those from different classes. \cite{assran2020supervision,li2021comatch} share a similar idea in the semi-supervised setting. Instead of treating all samples as equally important, \cite{robinson2020contrastive} proposes to assign adaptive weights for negative samples when calculating contrastive loss. Specifically, they focus on hard negative samples, which have high similarity but actually belong to different classes. They prove that these samples are harder to learn from and are worth assigning higher weights.

Specifically, for a given graph $G_i \in G$, SimCLR treats differently augmented views of the same sample as the positive pair $\{\tilde{z_i}',\tilde{z_i}''\}$, the remaining $2 \times (N - 1)$ augmented samples are treated as negatives, denoted as \( G_{\text{neg}} = \{G \setminus G_i\} \).
Equation~\eqref{eq:loss_simCLR} defines the SimCLR loss for the positive pair $\{i,j\}$, \( \tau \) is a temperature parameter used to scale the \( \text{sim}(\cdot, \cdot) \), and \( \mathbb{I}_{[k \neq i]} \) is an indicator function used to exclude comparisons with itself.
\begin{equation}
\ell_{i,j}=-\log \frac{f_{\tau}(z_i,z_j)}{\sum_{k=1}^{2N}\mathbb{I}_{[k\neq i]}f_{\tau}(z_i,z_k)}
\label{eq:loss_simCLR}
\end{equation}

In node classification task, GRACE \cite{zhu2020deep} generates contrastive views by randomly masking both nodes and edges, and considers the inter-view and intra-view negative pairs when designing the contrastive loss. For each node \( v_i \in V \), let the representations of the augmented views be denoted as \( z'_i \) and \( z''_i \). The GRACE loss is then defined as Equation~\eqref{eq:grace_loss}.

\begin{equation}
\ell(z'_i,z''_i)=-\log \frac{f_{\tau}(z'_i,z''_i)}{f_{\tau}(z'_i,z''_i)+ \sum\limits_{k=1}^N \mathbb{I}_{[k\neq i]}f_{\tau}(z'_i,z'_k)+  \sum\limits_{k=1}^N \mathbb{I}_{[k\neq i]}f_{\tau}(z'_i,z''_k)}
\label{eq:grace_loss}
\end{equation}

\section{Details of Experiments}
\label{apx:experiment}

\textbf{Settings}
We fixed the weight decay to $1e-5$. The feature dropping rate $p_f$, and edge drop rate $p_e$, are in the range [0, 0.4], the reweighting factors $\alpha$ and $\beta$ are in the range (0, 1]. The early stop $K$ rounds is set to 20. More differentiated parameter settings across different datasets are shown in Table \ref{tab: settings}, where `Neurons' represents the number of neurons in the hidden and project hidden layers.

\begin{table}[http]
  \centering
  \caption{Parameters Settings on Different Datasets}
  \label{tab: settings}
  \resizebox{0.5\textwidth}{!}{
  \begin{tabular}{cccccc}
    \hline
    Dataset  & $\tau$ & Neurons & Epoch & Learning Rate & Activation \\ \hline
Cora     & 0.4    & 128     & 300   & 5e-4  & relu        \\
Citeseer & 0.9    & 256     & 300   & 1e-3   & prelu       \\
Pubmed   & 0.7    & 256     & 350   & 1e-3   & relu       \\
Wiki-CS  & 0.4    & 256     & 350   & 1e-2  & relu        \\     \hline
  \end{tabular}
  }
\end{table}

The evaluation metrics used in the paper are defined as below.
\begin{align}
\label{eq:f-score}
\text{Precision} &= \frac{TP}{TP + FP}, 
\text{Recall} = \frac{TP}{TP + FN}, 
\text{F1-score} = \frac{2 \cdot \text{Precision} \cdot \text{Recall}}{\text{Precision} + \text{Recall}}
\end{align}

\noindent
\textbf{Degree Level Results}
The tables below provide supplementary results for degree-level performance analysis on Cora and PubMed datasets.
For the Cora dataset, SHARP displayed the most significant global performance improvement. At the degree-level analysis, it also improved predictions for low-degree nodes.
For the PubMed dataset, SHARP showed least global performance improvements. However, at the degree-level analysis, SHARP still improved the prediction of low-degree node.

\begin{table*}[http]
\centering
\caption{Comparison of Degree-level Prediction F1-score Across Models on Cora Dataset (GCN as the foundation graph layer, $r=0$, $\Delta$ indicates difference with GCN)}
\resizebox{\textwidth}{!}{
\begin{tabular}{cc|c|cc|cc|cc|cc}
\toprule
Degree & Size & GCN & GRACE & $\Delta$ & SCL & $\Delta$ & Debias & $\Delta$  & SHARP(Ours) & $\Delta$ \\
\midrule
1      & 161  & 0.7143   & 0.7323                    & 1.80\%                                          & 0.7565                  & 4.22\%                                        & 0.7758                     & 6.15\%                                           & 0.8081                        & \textbf{9.38\%}                               \\
2      & 226  & 0.8319   & 0.8208                    & -1.11\%                                         & 0.8000                     & -3.19\%                                       & 0.8128                     & -1.91\%                                          & 0.8504                        & \textbf{1.85\%}                               \\
3      & 223  & 0.8341   & 0.7987                    & -3.54\%                                         & 0.783                   & -5.11\%                                       & 0.7892                     & -4.49\%                                          & 0.8744                        & \textbf{4.03\%}                               \\
4      & 134  & 0.9030    & 0.8373                    & -6.57\%                                         & 0.8299                  & -7.31\%                                       & 0.8507                     & -5.23\%                                          & 0.9194                        & \textbf{1.64\%}                               \\
5      & 101  & 0.8713   & 0.8861                    & 1.48\%                                          & 0.8337                  & -3.76\%                                       & 0.8673                     & -0.40\%                                          & 0.9050                        & \textbf{3.37\%}                               \\
6      & 52   & 0.8077   & 0.7385                    & -6.92\%                                         & 0.7212                  & -8.65\%                                       & 0.7615                     & -4.62\%                                          & 0.7865                        & -2.12\%                              \\
7      & 25   & 0.8000      & 0.8280                     & 2.80\%                                          & 0.7960                   & -0.40\%                                       & 0.8240                      & 2.40\%                                           & 0.8720                         & \textbf{7.20\%}                               \\ \bottomrule
\end{tabular}
}
\end{table*}

\begin{table*}[http]
\centering
\caption{Comparison of Degree-level Prediction F1-score Across Models on Cora Dataset (GCN as the foundation graph layer, $r=0.3$, $\Delta$ indicates difference with GCN)}
\resizebox{\textwidth}{!}{
\begin{tabular}{cc|c|cc|cc|cc|cc}
\toprule
Degree & Size & GCN & GRACE & $\Delta$ & SCL & $\Delta$ & Debias & $\Delta$  & SHARP(Ours) & $\Delta$ \\
\midrule
1      & 161  & 0.7081   & 0.7012                    & -0.69\%                                         & 0.7665                  & 5.84\%                                        & 0.7466                     & 3.85\%                                           & 0.8174                        & \textbf{10.93\%}                              \\
2      & 226  & 0.8009   & 0.7867                    & -1.42\%                                         & 0.8049                  & 0.40\%                                        & 0.7872                     & -1.37\%                                          & 0.8358                        & \textbf{3.49\%}                               \\
3      & 223  & 0.8206   & 0.7623                    & -5.83\%                                         & 0.783                   & -3.76\%                                       & 0.7677                     & -5.29\%                                          & 0.8538                        & \textbf{3.32\%}                               \\
4      & 134  & 0.9104   & 0.8224                    & -8.80\%                                         & 0.8582                  & -5.22\%                                       & 0.8313                     & -7.91\%                                          & 0.9112                        & \textbf{0.08\%}                               \\
5      & 101  & 0.8416   & 0.8663                    & 2.47\%                                          & 0.8871                  & 4.55\%                                        & 0.8653                     & 2.37\%                                           & 0.9079                        & \textbf{6.63\%}                               \\
6      & 52   & 0.7308   & 0.7365                    & 0.57\%                                          & 0.7423                  & 1.15\%                                        & 0.7173                     & -1.35\%                                          & 0.7654                        & \textbf{3.46\%}                               \\
7      & 25   & 0.8400     & 0.7760                     & -6.40\%                                         & 0.8160                   & -2.40\%                                       & 0.8000                        & -4.00\%                                          & 0.9280                         & \textbf{8.80\%}                               \\ \bottomrule
\end{tabular}
}
\end{table*}

\begin{table*}[http]
\centering
\caption{Comparison of Degree-level Prediction Performance on Cora Dataset (GAT as the graph layer, $r=0$, $\Delta$ indicates difference with the baseline GAT)}
\resizebox{\textwidth}{!}{
\begin{tabular}{cc|c|cc|cc|cc|cc}
\toprule
Degree & Size & GAT & GRACE & $\Delta$ & SCL & $\Delta$ & Debias & $\Delta$  & SHARP(Ours) & $\Delta$ \\
\midrule
1      & 161  & 0.7143   & 0.7137                    & -0.06\%                                         & 0.7702                  & 5.59\%                                        & 0.7944                     & 8.01\%                                           & 0.8106                        & \textbf{9.63\%}                               \\
2      & 226  & 0.8186   & 0.8181                    & -0.05\%                                         & 0.815                   & -0.36\%                                       & 0.8283                     & 0.97\%                                           & 0.869                         & \textbf{5.04\%}                               \\
3      & 223  & 0.8475   & 0.8027                    & -4.48\%                                         & 0.8045                  & -4.30\%                                       & 0.8323                     & -1.52\%                                          & 0.8857                        & \textbf{3.82\%}                               \\
4      & 134  & 0.8955   & 0.8455                    & -5.00\%                                         & 0.8478                  & -4.77\%                                       & 0.8672                     & -2.83\%                                          & 0.9269                        & \textbf{3.14\%}                               \\
5      & 101  & 0.8713   & 0.8782                    & 0.69\%                                          & 0.8495                  & -2.18\%                                       & 0.8693                     & -0.20\%                                          & 0.9059                        & \textbf{3.46\%}                               \\
6      & 52   & 0.8462   & 0.7558                    & -9.04\%                                         & 0.7577                  & -8.85\%                                       & 0.7731                     & -7.31\%                                          & 0.7846                        & -6.16\%                              \\
7      & 25   & 0.8800     & 0.8240                     & -5.60\%                                         & 0.7680                   & -11.20\%                                      & 0.7760                      & -10.40\%                                         & 0.8800                          & \textbf{0.00\%}                               \\ \bottomrule
\end{tabular}
}
\end{table*}

\begin{table*}[http]
\centering
\caption{Comparison of Degree-level Prediction F1-score Across Models on Cora Dataset (GAT as the foundation graph layer, $r=0.3$, $\Delta$ indicates changes compared with GAT)}
\resizebox{0.98\textwidth}{!}{
\begin{tabular}{cc|c|cc|cc|cc|cc}
\toprule
Degree & Size & GAT & GRACE & $\Delta$ & SCL & $\Delta$ & Debias & $\Delta$  & SHARP(Ours) & $\Delta$ \\
\midrule
1      & 161  & 0.7081   & 0.7075                    & -0.06\%                                         & 0.7689                  & 6.08\%                                        & 0.7882                     & 8.01\%                                           & 0.8273                        & \textbf{11.92\%}                              \\
2      & 226  & 0.8097   & 0.7996                    & -1.01\%                                         & 0.8066                  & -0.31\%                                       & 0.8155                     & 0.58\%                                           & 0.8597                        & \textbf{5.00\%}                               \\
3      & 223  & 0.8296   & 0.8148                    & -1.48\%                                         & 0.8018                  & -2.78\%                                       & 0.8036                     & -2.60\%                                          & 0.8525                        & \textbf{2.29\%}                               \\
4      & 134  & 0.9030    & 0.8336                    & -6.94\%                                         & 0.8724                  & -3.06\%                                       & 0.8604                     & -4.26\%                                          & 0.9164                        & \textbf{1.34\%}                               \\
5      & 101  & 0.9010    & 0.8762                    & -2.48\%                                         & 0.8891                  & -1.19\%                                       & 0.8772                     & -2.38\%                                          & 0.8921                        & -0.89\%                              \\
6      & 52   & 0.7692   & 0.7731                    & 0.39\%                                          & 0.7615                  & -0.77\%                                       & 0.7865                     & 1.73\%                                           & 0.7904                        & \textbf{2.12\%}                               \\
7      & 25   & 0.8800     & 0.8200                      & -6.00\%                                         & 0.7760                   & -10.40\%                                      & 0.8040                      & -7.60\%                                          & 0.9240                         & \textbf{4.40\%}                               \\ \bottomrule
\end{tabular}
}
\label{tab:degree_cora_gat}
\end{table*}

\begin{table*}[http]
\centering
\caption{Comparison of Degree-level Prediction F1-score Across Models on PubMed Dataset (GCN as the foundation graph layer, $r=0$, $\Delta$ indicates difference with GCN)}
\resizebox{\textwidth}{!}{
\begin{tabular}{cc|c|cc|cc|cc|cc}
\toprule
Degree & Size & GCN & GRACE & $\Delta$ & SCL & $\Delta$ & Debias & $\Delta$  & SHARP(Ours) & $\Delta$ \\
\midrule
1 & 483 & 0.8406 & 0.8077 & -3.29\% & 0.8685 & 2.79\%           & 0.8638 & 2.32\%           & 0.8712 & \textbf{3.06\%}  \\
2 & 153 & 0.8824 & 0.8497 & -3.27\% & 0.9307 & \textbf{4.83\%}  & 0.9275 & 4.51\%           & 0.9157 & \textbf{3.33\%}  \\
3 & 79  & 0.9494 & 0.9203 & -2.91\% & 0.9658 & 1.64\%           & 0.9557 & 0.63\%           & 0.9620 & \textbf{1.26\%}  \\
4 & 40  & 0.9000 & 0.8650 & -3.50\% & 0.9500 & 5.00\%           & 0.9500 & 5.00\%           & 0.9675 & \textbf{6.75\%}  \\
5 & 35  & 0.9143 & 0.8686 & -4.57\% & 0.9514 & \textbf{3.71\%}  & 0.9400 & \textbf{2.57\%}  & 0.9286 & 1.43\%           \\
6 & 28  & 0.9643 & 0.8893 & -7.50\% & 0.8929 & \textbf{-7.14\%} & 0.8929 & \textbf{-7.14\%} & 0.8929 & \textbf{-7.14\%} \\
7 & 14  & 0.9286 & 0.9571 & 2.85\%  & 0.9429 & \textbf{1.43\%}  & 0.9786 & 5.00\%           & 0.9929 & \textbf{6.43\%} 
\\ \bottomrule
\end{tabular}
}
\end{table*}

\newpage
\noindent
\textbf{Visualisation Results} 
Figure~\ref{fig:vis_tsne_all} shows t-SNE visualization on all methods (SHARP and baselines). SHARP displays the clearest class boundaries across all datasets.

\begin{figure*}[http]
    \centering
    \begin{subfigure}{\linewidth}
        \includegraphics[width=1\linewidth]{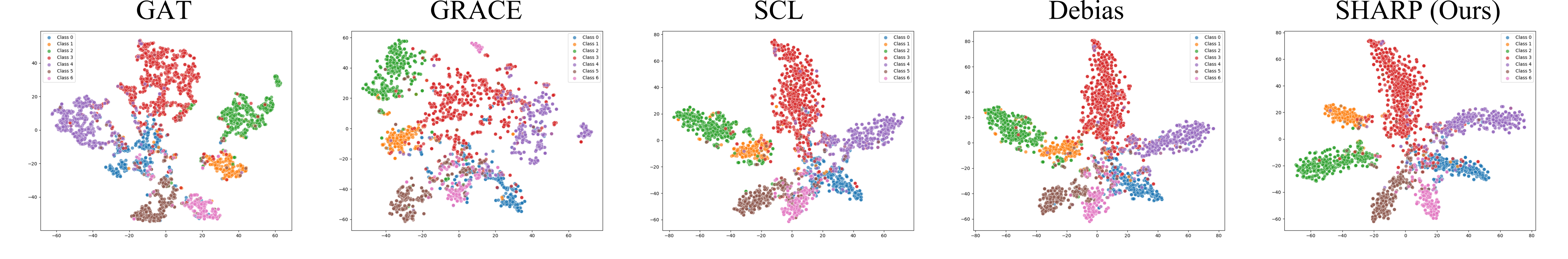} 
        \caption{Cora dataset}
    \end{subfigure}
    \begin{subfigure}{\linewidth}
        \includegraphics[width=0.99\linewidth]{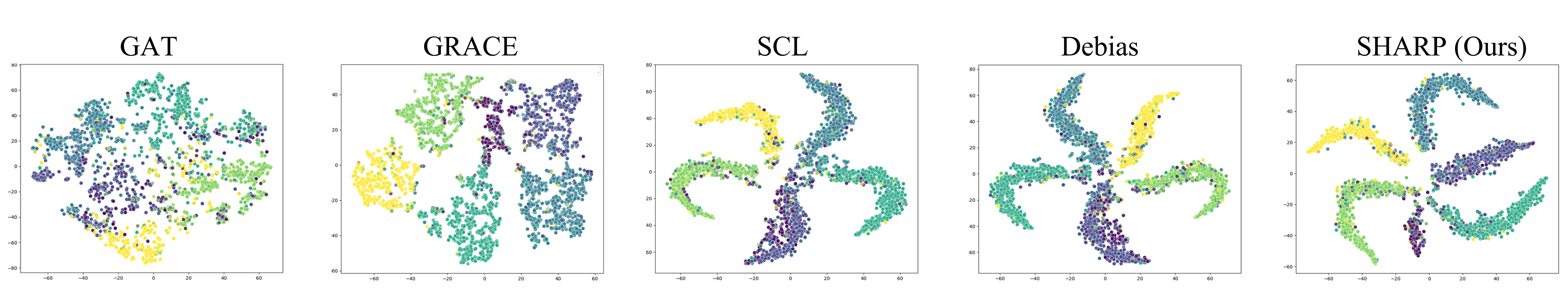} 
        \caption{Citeseer dataset}
    \end{subfigure}
    \begin{subfigure}{\linewidth}
        \includegraphics[width=1\linewidth]{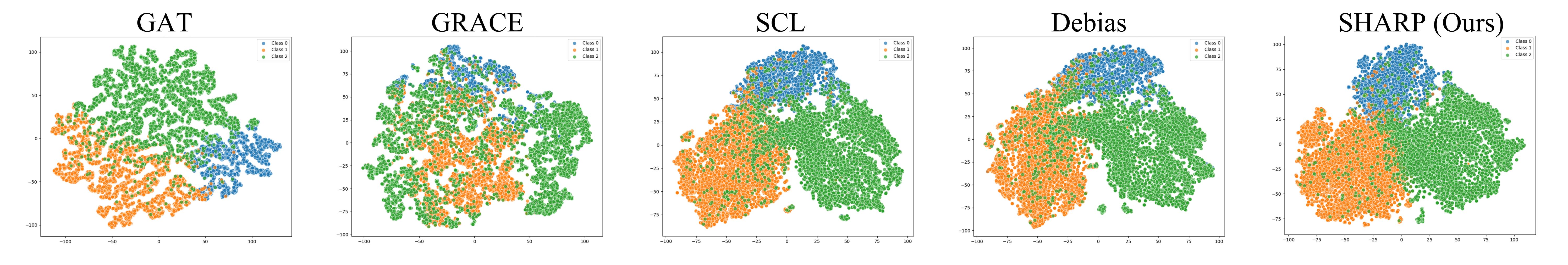} 
        \caption{PubMed dataset}
    \end{subfigure}
    \begin{subfigure}{\linewidth}
        \includegraphics[width=1\linewidth]{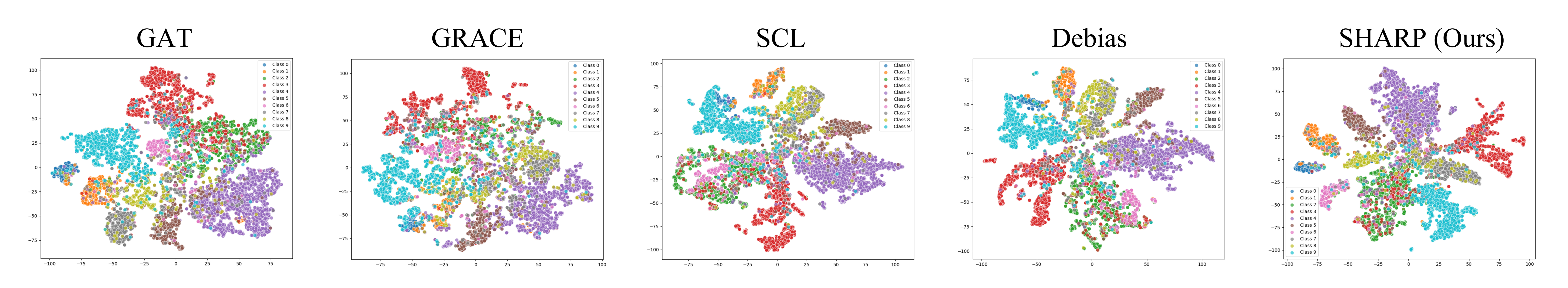} 
        \caption{Wiki-CS dataset}
    \end{subfigure}
    \caption{The t-SNE visualizations of embeddings on different models (GAT Layer, $r=0.3$)}
    \label{fig:vis_tsne_all}
\end{figure*}

\clearpage  

\onecolumn
\section{Theoretical Analysis of Worst-case Scenario}
\label{sec:theoretical}
This section presents an analysis based on the work of Robinson et al.~\cite{robinson2020contrastive}. In contrast to the approach described in~\cite{robinson2020contrastive}, which samples from an underlying distribution based on similarity in unsupervised learning, our method employs a sigmoid classifier $\sigma(z): \mathcal{Z}\times \mathcal{C} \rightarrow [0, 1]$ on representation space $\mathcal{Z}$. This classifier is trained on labelled data to predict the unlabelled data, with marginal distribution $p(x)$. The positive and negative samples are associated with the distributions $x^+ \sim q^{+}$ and $x^- \sim q^{-}$, respectively, and are related to the embedding function. With slight abuse of notations, we refer to the embedding function as $g$. Specifically:
\begin{align*}
    q^{+}_{\alpha}(x^+) &\propto (e^{g(x)g(x^+)} + \alpha e^{- g(x)g(x^+)}) \cdot \sigma_{\textrm{sigmoid}}(g_p(x^+)) \\ q^{-}_\beta(x^-) &\propto e^{\beta g(x)g(x^-)} \cdot \sigma_{\textrm{sigmoid}}(g_p(x^-))
\end{align*}
where $\alpha, \beta \in (0, 1]$, $g_p$ denotes the embedding function learned under $p$.
Increasing the scalar $\alpha$ results in reduced weights for positive samples (note that it is the normalised weights), while increasing $\beta$ leads to higher weights for negative samples in the underlying distribution. Given the symmetric property of the HAR loss function (Equations \eqref{eq:HAR_loss} and \eqref{eq:ell_loss}), the loss function can be expressed as the expectation of positive and negative samples from the transformation. We can reformulate the following loss function $\mathcal{L}$ to align with previous work~\cite{chuang2020debiased,robinson2020contrastive}:
\begin{align}\label{eq:large_loss}
    \mathcal{L}(g, \alpha, \beta)
    =\mathbb{E}_{x \sim \mathcal{T}, x^+ \sim p^+}
        \left[-\log 
            \frac{\textsc{I}_\alpha}{\textsc{I}_\alpha+\textsc{II}}
        \right],
\end{align}
where
\begin{align*}
    \textsc{I}_\alpha &= \mathbb{E}_{x^+ \in \{ x'^+, x''^+ \}} [e^{\bar{g}(x)\bar{g}(x^+)}] \\
    &\quad + \alpha \mathbb{E}_{x^+ \notin \{ x'^+, x''^+ \}}[e^{-\bar{g}(x)\bar{g}(x^+)}],\\
    \textsc{II} &= \mathbb{E}_{x^- \sim q_{\beta}^-}[e^{\beta\bar{g}(x)\bar{g}(x^-)}].
\end{align*}
Here, $\bar{g}(t)\coloneq g(t)/(\sqrt{\tau}\cdot\|g(t)\|)$ is the normalised embedding function, and we have the bounds $e^{-1/\tau} \leq |e^{\bar{g}(u)\bar{g}(v)}| \leq e^{1/\tau}$ for any $u, v \in \mathcal{X}$. $\{ x'^+, x''^+ \}$ refers to the set of the pairs under graph argumentation. The first part of $\textsc{I}_\alpha$ refers to the pairs of (two) data points after augmentation, while the second term is greater than 0.
Taking $\alpha=0$, the Equation~\eqref{eq:large_loss} degenerates to the case described in~\cite{robinson2020contrastive}.
Further, given fixed $\beta$ and by monotonicity property of logarithm function, we observe that the upper bound of loss function is reached when $\alpha=0$ (w.r.t. the ``lower'' classification performance), Therefore, we have the following proposition:

\begin{proposition}\label{prop:alpha}
    With a fixed $\beta$ and when $\alpha=0$, the worst-case performance of SHARP is bounded by the performance of an unsupervised method~\cite{robinson2020contrastive}. That is, the SHARP's worst-case performance is when SHARP degenerates to an unsupervised contrastive learning approach. 
\end{proposition}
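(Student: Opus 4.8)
The plan is to show that, with $\beta$ held fixed, the loss $\mathcal{L}(g,\alpha,\beta)$ in Equation~\eqref{eq:large_loss} is monotonically non-increasing in $\alpha$ over $[0,1]$, so that its supremum over the admissible range is attained at $\alpha=0$. Since the $\alpha=0$ objective coincides with the unsupervised objective of \cite{robinson2020contrastive}, this identifies the worst-case (largest-loss) configuration with a quantity whose downstream misclassification risk is already known to be bounded, which is exactly the statement of the proposition.

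First I would isolate the dependence on $\alpha$. With $\beta$ fixed, the denominator contribution $\textsc{II} = \mathbb{E}_{x^- \sim q_\beta^-}[e^{\beta\bar{g}(x)\bar{g}(x^-)}]$ does not involve $\alpha$, and by the stated bounds $e^{-1/\tau}\le |e^{\bar{g}(u)\bar{g}(v)}|\le e^{1/\tau}$ it is a strictly positive constant for each fixed $x$. The numerator splits as $\textsc{I}_\alpha = a(x) + \alpha\, b(x)$, where $a(x) = \mathbb{E}_{x^+\in\{x'^+,x''^+\}}[e^{\bar{g}(x)\bar{g}(x^+)}]$ is the augmentation-pair term and $b(x) = \mathbb{E}_{x^+\notin\{x'^+,x''^+\}}[e^{-\bar{g}(x)\bar{g}(x^+)}]>0$ is the second (supervised) term, strictly positive because it averages positive exponentials. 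I would then establish pointwise monotonicity of the integrand: writing $\phi(\alpha) = -\log\frac{\textsc{I}_\alpha}{\textsc{I}_\alpha+\textsc{II}} = \log\!\left(1 + \frac{\textsc{II}}{a(x)+\alpha b(x)}\right)$, the argument of the logarithm strictly decreases in $\alpha$ whenever $b(x)>0$ and $\textsc{II}>0$, so by monotonicity of $\log$ the integrand is non-increasing in $\alpha$. Because expectation preserves pointwise inequalities, this gives $\mathcal{L}(g,\alpha,\beta)\le \mathcal{L}(g,0,\beta)$ for every $\alpha\in[0,1]$.

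To close the argument I would identify the endpoint. Setting $\alpha=0$ eliminates $b(x)$ and recovers $\textsc{I}_0 = a(x)$, which is precisely the unsupervised numerator of \cite{robinson2020contrastive}, so the maximal loss $\mathcal{L}(g,0,\beta)$ equals their objective; invoking their risk bound on this degenerate objective then bounds SHARP's worst-case risk, completing the proof.

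The main obstacle I anticipate lies in this final transfer step rather than in the monotonicity itself. The negative-sampling distribution $q_\beta^-$ here carries an extra sigmoid reweighting $\sigma_{\textrm{sigmoid}}(g_p(\cdot))$ arising from the semi-supervised pseudo-labelling, so I must verify that the bound of \cite{robinson2020contrastive} survives this change of measure, either by noting that the sigmoid factor is bounded in $[0,1]$ and so only rescales the sampling density within constants absorbed by normalization, or by re-deriving their concentration estimate under the reweighted $q_\beta^-$. A secondary and more routine point is justifying that the pointwise monotonicity inequality passes through the nested expectations over $x$, $x^+$, and $x^-$; this holds because the uniform bounds on $e^{\bar{g}\bar{g}}$ keep every integrand uniformly bounded and the inequality is valid pointwise in $x$.
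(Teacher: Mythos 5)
Your proof is correct and takes essentially the same route as the paper: the paper likewise notes that the $\alpha$-weighted term in $\textsc{I}_\alpha$ is positive and invokes monotonicity of the logarithm to conclude the loss is maximized at $\alpha=0$, where the objective degenerates to the unsupervised one of Robinson et al.; your version simply makes the pointwise monotonicity in $\alpha$ explicit. The change-of-measure concern you flag (the sigmoid reweighting inside $q_\beta^-$) is exactly what the paper handles separately via its Proposition~2 on the sampling ratio $w(x^-)\geq 1$ with constant expectation, which feeds into the worst-case negative sampling lemma.
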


Before analysing $\mathcal{L}(g, \beta) \coloneq \mathcal{L}(g, 0, \beta)$, one may note that the classifier is learned on a labelled dataset. Intuitively, the classifier can assign higher weights to negative samples than those obtained by directly sampling from the dataset. We have the following proposition:

\begin{proposition}[Sampling ratio]\label{prop:ratio}
    The probability of a selected sample being a negative sample based on our method is at least equal to that of the probability of a randomly selected sample being a negative sample from the marginal distribution in~\cite{robinson2020contrastive}. That is, the sampling ratio $w(x^-)$ is at least greater than or equal to 1, and exists a constant,
    \begin{align*}
        w(x^-) \coloneq \frac{\sigma_{\textrm{sigmoid}}(g_{p}(x^-))}{p(x^-)} \geq 1,
    \end{align*}
    and
    \begin{align*}
        \mathbb{E}_{x^-\sim p^{-}}[w(x^-)] = \mathrm{const}.
    \end{align*}
\end{proposition}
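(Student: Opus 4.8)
The plan is to read $w(x^-)$ as a likelihood ratio (a Radon--Nikodym derivative) between our classifier-guided negative proposal and the purely marginal proposal of~\cite{robinson2020contrastive}. Writing the marginal as the class mixture $p(x) = \tau^{+} p^{+}(x) + (1-\tau^{+}) p^{-}(x)$, both negative distributions share the concentration factor $e^{\beta g(x) g(x^-)}$, so after cancelling it the only difference is that our proposal carries the classifier confidence $\sigma_{\textrm{sigmoid}}(g_p(\cdot))$ in place of $p(\cdot)$. Hence $w(x^-) = \sigma_{\textrm{sigmoid}}(g_p(x^-)) / p(x^-)$ is exactly the extra mass our method assigns to a candidate negative relative to random sampling, and the two claims reduce to (i) this reweighting never shrinks a negative's mass, and (ii) its average over true negatives is a fixed normaliser.

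For the inequality $w(x^-) \ge 1$, I would first invoke the training objective of the sigmoid head: trained on the labelled data to separate same-class from different-class pairs, its population optimum is the Bayes posterior $\sigma_{\textrm{sigmoid}}(g_p(x)) = P(\mathrm{neg}\mid x)$. Because for any anchor the inter-class pairs vastly outnumber the intra-class ones --- the ratio $Q_i$ defined for Equation~\eqref{eq:neg_reweighted} is large --- the prior mass $1-\tau^{+}$ on the negative class is close to one, so this posterior stays bounded below on the negative support. I would then bound $p(x^-)$ from above using the normalised-embedding bounds $e^{-1/\tau} \le |e^{\bar{g}(u)\bar{g}(v)}| \le e^{1/\tau}$ already established for Equation~\eqref{eq:large_loss}, which confine the density on the representation sphere. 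Combining the lower bound on the classifier output with the upper bound on the marginal density yields $\sigma_{\textrm{sigmoid}}(g_p(x^-)) \ge p(x^-)$, that is, $w(x^-) \ge 1$.

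For the constancy of $\mathbb{E}_{x^- \sim p^{-}}[w(x^-)]$, I would substitute the Bayes form of the classifier and the mixture decomposition of $p$ directly into
\begin{align*}
\mathbb{E}_{x^- \sim p^{-}}[w(x^-)] = \int p^{-}(x)\,\frac{\sigma_{\textrm{sigmoid}}(g_p(x))}{p(x)}\,dx,
\end{align*}
and observe that the integrand depends only on the fixed classifier $g_p$ and the class priors, never on the anchor $x$ used to form the contrastive pair. The integral therefore collapses to a single number shared by every pair, which is the precise sense in which it is constant; this is also the normalisation that lets $w$ act as a legitimate importance weight, so that reweighting by it keeps the underlying negative law a probability measure.

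The main obstacle is the pointwise domination $\sigma_{\textrm{sigmoid}}(g_p(x^-)) \ge p(x^-)$ in the first claim: it cannot hold for \emph{every} $x^-$ without a calibration/consistency hypothesis on the learned head, since a finite-sample classifier may underweight atypical negatives. The honest route is to state the (near-)Bayes-optimality assumption explicitly and, where that is too strong, to prove the inequality only in expectation over $p^{-}$ --- which is all the downstream bound behind Proposition~\ref{prop:alpha} actually consumes --- rather than almost surely. Reconciling the density-versus-probability mismatch between $\sigma_{\textrm{sigmoid}}$ and $p$, by ensuring both are measured against the same base measure on $\mathcal{Z}$, is the other technical point I would settle before the integral in the second claim is well posed.
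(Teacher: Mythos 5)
The paper itself offers no proof of Proposition~\ref{prop:ratio}: it is asserted on the strength of the one-sentence intuition that a classifier trained on labelled data ``can assign higher weights to negative samples than those obtained by directly sampling from the dataset,'' and is then consumed as an assumption inside the proof of the worst-case negative sampling lemma. Your attempt to actually derive it is therefore more ambitious than the source, and your diagnosis of the two soft spots --- the density-versus-probability mismatch between $\sigma_{\textrm{sigmoid}}(g_p(\cdot))\in[0,1]$ and the density $p(\cdot)$, and the implausibility of the pointwise domination without a calibration hypothesis --- is exactly right. Your argument for the second claim (constancy of $\mathbb{E}_{x^-\sim p^-}[w(x^-)]$) is also sound: $w$ does not depend on the anchor, and since $p^-(x)/p(x)\le 1/(1-\tau^+)$ under the mixture decomposition, the integral is finite.

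The genuine gap is in your derivation of $w(x^-)\ge 1$. Combining ``the Bayes posterior $P(\mathrm{neg}\mid x)$ is bounded below on the negative support'' with ``$p(x^-)$ is bounded above'' yields $\sigma_{\textrm{sigmoid}}(g_p(x^-))\ge p(x^-)$ only if the lower bound on the first quantity exceeds the upper bound on the second, and you establish neither constant. In particular, the bounds $e^{-1/\tau}\le |e^{\bar{g}(u)\bar{g}(v)}|\le e^{1/\tau}$ from Equation~\eqref{eq:large_loss} constrain the similarity kernel, not the marginal density $p$, so they give you no upper bound on $p(x^-)$ at all; on a continuous $\mathcal{Z}$ the density can be arbitrarily large on small sets, and the inequality fails there. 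Your proposed repair --- retreating to an inequality in expectation over $p^-$ --- does not rescue the downstream argument either: in the proof of the worst-case lemma, $w(x^-)\ge 1$ is used \emph{pointwise} on the high-similarity event to pass from $Z_\beta=\int e^{\beta g(x)g(x^-)}p(x^-)w(x^-)\,\mathrm{d}x^-$ to the lower bound $e^{\beta(M-\epsilon/2)}\,\mathbb{P}_{x^-\sim p^-}\bigl(g(x)g(x^-)>M-\epsilon/2\bigr)$, and an expectation-level bound cannot lower-bound an integral restricted to that event. What would suffice, and what you should state as the explicit hypothesis, is a uniform positive lower bound $w(x^-)\ge c>0$ almost surely under $p^-$ (equivalently, absolute continuity of the classifier-weighted measure with respect to $p^-$ with a density bounded away from zero); this only rescales $\rho_\epsilon^{-1}$ by $c^{-1}$ and leaves the limit $\beta\to\infty$ intact, whereas the literal claim $w\ge 1$ is neither provable from the stated assumptions nor actually needed.
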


We have the following lemma on the worst-case objective function, building on the Proposition 3 in~\cite{robinson2020contrastive}.

\begin{lemma}[Worst-case negative sampling]
    When $\alpha=0$, let $\bar{\mathcal{L}}(g) = \sup_{\beta} \mathcal{L}(g, \beta)$. For any $\tau>0$, we have $\mathcal{L}(g, \beta)\rightarrow \bar{\mathcal{L}}(g)$ as $\beta\rightarrow\infty$.
\end{lemma}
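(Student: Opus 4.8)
The plan is to analyze the limiting behavior of $\mathcal{L}(g,\beta)$ as $\beta \to \infty$ with $\alpha$ fixed at $0$, leveraging the structure of the expectation $\textsc{II} = \mathbb{E}_{x^- \sim q_\beta^-}[e^{\beta \bar{g}(x)\bar{g}(x^-)}]$. The central observation is that as $\beta$ grows, the negative-sample distribution $q_\beta^-(x^-) \propto e^{\beta g(x)g(x^-)}\cdot \sigma_{\textrm{sigmoid}}(g_p(x^-))$ increasingly concentrates its mass on the negative samples $x^-$ that maximize the inner product $\bar{g}(x)\bar{g}(x^-)$, i.e. the \emph{hardest} negatives. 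This is precisely the tilting/annealing phenomenon: an exponentially reweighted family sharpens onto the supremum of the exponent as the inverse-temperature parameter diverges. I would therefore first rewrite $\textsc{II}$ explicitly using the normalization constant of $q_\beta^-$, so that $\textsc{II}$ appears as a ratio of two $\beta$-indexed integrals against the base measure $p^-$ weighted by $\sigma_{\textrm{sigmoid}}(g_p(\cdot))$.

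Next I would establish that this ratio converges, as $\beta \to \infty$, to the essential supremum of $e^{\bar{g}(x)\bar{g}(x^-)}$ over the support of the (reweighted) negative distribution. The key step is a Laplace-type / dominated-convergence argument: because the exponent $\bar{g}(x)\bar{g}(x^-)$ is bounded (we are given $e^{-1/\tau}\le |e^{\bar{g}(u)\bar{g}(v)}| \le e^{1/\tau}$), the reweighted measure places asymptotically all its mass near the maximizing negatives, and the expected value of the exponentiated similarity converges monotonically to its worst-case value. Here I would invoke Proposition 3 of Robinson et al.~\cite{robinson2020contrastive} essentially verbatim, since with $\alpha=0$ the reformulated loss $\mathcal{L}(g,0,\beta)$ already matches their unsupervised objective up to the sampling-ratio reweighting $w(x^-)$ quantified in Proposition~\ref{prop:ratio}; the finite multiplicative factor $w(x^-)$ (bounded below by $1$ with constant expectation) does not affect the existence of the limit, only the constant it converges to.

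The final step is to transfer the convergence of $\textsc{II}$ through the continuous map $t \mapsto -\log \tfrac{\textsc{I}_0}{\textsc{I}_0 + t}$ and the outer expectation $\mathbb{E}_{x \sim \mathcal{T}, x^+\sim p^+}$. Since $\textsc{I}_0$ is independent of $\beta$ and the integrand is bounded and monotone in $\beta$, I would apply the monotone (or dominated) convergence theorem to conclude $\mathcal{L}(g,\beta) \to \bar{\mathcal{L}}(g) = \sup_\beta \mathcal{L}(g,\beta)$, where the identification of the limit with the supremum follows from the monotonicity established in the second step together with the $-\log$ being increasing in $\textsc{II}$.

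The main obstacle I anticipate is making the interchange of limit and expectation fully rigorous when the argument distribution $q_\beta^-$ itself depends on $\beta$: one must be careful that the inner limit ``$\textsc{II} \to \operatorname*{ess\,sup} e^{\bar{g}(x)\bar{g}(x^-)}$'' holds for $\mathcal{T}$-almost every anchor $x$ and uniformly enough to justify pulling the limit outside $\mathbb{E}_{x\sim\mathcal{T}}$. The boundedness of the exponent on $[e^{-1/\tau}, e^{1/\tau}]$ is what rescues this: it furnishes a uniform integrable dominating function for the outer expectation and guarantees the inner concentration limit is uniform in the anchor. Provided the support and essential-supremum structure of the negative distribution is handled cleanly (and assuming, as in~\cite{robinson2020contrastive}, that the reweighting $\sigma_{\textrm{sigmoid}}(g_p(\cdot))$ is strictly positive on the relevant support), the remaining computations are the routine annealing estimates and require no new ideas.
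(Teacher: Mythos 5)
Your proposal is correct and follows essentially the same route as the paper: both adapt Proposition 3 of Robinson et al.\ by showing the $\beta$-tilted negative distribution concentrates on the essential supremum of $\bar{g}(x)\bar{g}(x^-)$ (the paper's good-event set $\mathcal{G}_\epsilon$ is the concrete form of your Laplace-type concentration step), both control the classifier reweighting $w(x^-)$ via the sampling-ratio proposition, and both use the uniform bound $e^{1/\tau}$ on the exponentiated similarity to pass the limit through the logarithm and the outer expectation. Your additional monotonicity-in-$\beta$ observation is a clean way to identify the limit with $\sup_\beta \mathcal{L}(g,\beta)$, which the paper achieves equivalently by noting $e^{\bar{g}(x)\bar{g}(x^-)} \leq e^{M(x)}$ almost surely.
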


\begin{proof}
Similar to the proof in~\cite{robinson2020contrastive}, we consider the essential supremum,
\begin{align*}
    M(x) = \sup \left\{ 0< m \leq 1: 
    \begin{array}{l}
        \bar{g}(x) \bar{g}(x^-)\leq m,\\
        \textrm{a.s. for}~ x^{-}\sim p^{-} 
    \end{array}
    \right\},
\end{align*}
with
\begin{align*}
    \bar{\mathcal{L}}(g, \beta)
    =\mathbb{E}_{x \sim \mathcal{T}, x^+ \sim p^+}
        \left[-\log 
            \frac{\textsc{I}_{\alpha=0}}{\textsc{I}_{\alpha=0} + \bar{Q} e^{M(x)}}
        \right],
\end{align*}
where $\bar{Q}$ is a constant of upper bound of the ratio of negative samples to positive samples. 
Recall that in a balanced and binary classification, we may have $\bar{Q}=1$. In cases of multi-class classification, $\bar{Q}$ is a constant chosen based on the ratio of the most imbalanced class for analysis.
We have the difference of the two terms,
\begin{align*}
    &\quad\left| \bar{\mathcal{L}}(g)-\mathcal{L}(g, \beta)\right|\\
    &= \mathbb{E}_{x \sim \mathcal{T}, x^+ \sim p^+} \left| \log \left( \bar{g}(x)\bar{g}(x^+)+\bar{Q}\mathbb{E}_{x^- \sim q_{\beta}^-}[\bar{g}(x)\bar{g}(x^-)] \right) \right. \\ 
    &\quad - \left. \log \left( \bar{g}(x)\bar{g}(x^+) + \bar{Q}e^{M(x)} \right) \right|\\
    &\leq \frac{e^{1/\tau}\bar{Q}}{\bar{Q}+1} \mathbb{E}_{x \sim \mathcal{T}}\mathbb{E}_{x^- \sim q_{\beta}^-}\left|e^{M(x)}-e^{\bar{g}(x)\bar{g}(x^-)}\right|\\
    &\leq e^{2/\tau} \mathbb{E}_{x \sim \mathcal{T}}\mathbb{E}_{x^- \sim q_{\beta}^-}\left|M(x)-\bar{g}(x)\bar{g}(x^-)\right|
\end{align*}

Following prior work, the first inequality is due to Lipschitz of logarithm function and rearranging the terms. Note that the expectation does not contain terms with $x^+$. The second inequality is due to $|e^{\bar{g}(x)\bar{g}(x^-)}| \leq e^{1/\tau}$.
Now we analyse the expectation of $|M(x)-\bar{g}(x)\bar{g}(x^-)|$. Given a fixed sample $x$, and a set of ``good'' events $\mathcal{G}_\epsilon=\{x^-:\bar{g}(x)\bar{g}(x^-) \geq M(x)-\epsilon\}$, and its complement set $\mathcal{G}_\epsilon^c$, for a fixed $t$ and $\epsilon>0$,
\begin{align*}
    &~\quad \mathbb{E}_{x^- \sim q_{\beta}^-}\left|M(x)-\bar{g}(x)\bar{g}(x^-)\right|\\
    &= \mathbb{P}_{x^- \sim q_{\beta}^-}(\mathcal{G}_\epsilon)\mathbb{E}_{x^- \sim q_{\beta}^-}\left[ \left|M(x)-\bar{g}(x)\bar{g}(x^-)\right| | \mathcal{G}_\epsilon \right] \\
    &\quad + \mathbb{P}_{x^- \sim q_{\beta}^-}(\mathcal{G}_\epsilon^c)\mathbb{E}_{x^- \sim q_{\beta}^-}\left[ \left|M(x)-\bar{g}(x)\bar{g}(x^-)\right| | \mathcal{G}_\epsilon^c \right]\\
    &\leq \epsilon + 2\mathbb{P}_{x^- \sim q_{\beta}^-}(\mathcal{G}_\epsilon^c)
\end{align*}

Further, with the Proposition~\ref{prop:ratio},
\begin{align*}
    &~\quad \mathbb{P}_{x^- \sim q_{\beta}^-}(\mathcal{G}_\epsilon^c)\\
    &= \int_{\mathcal{X}} \mathbb{I}_{(\bar{g}(x)\bar{g}(x^-) < M(x)-\epsilon)} \frac{e^{\beta g(x)g(x^-)} \cdot p(x^-)}{Z_\beta} \cdot w(x^-) \mathrm{d}x^-\\
    & \leq \frac{e^{\beta(M(x)-\epsilon)}}{Z_\beta} \mathbb{E}_{x^- \sim p^-} [w(x^-)]\\
    & \leq \frac{e^{\beta(M(x)-\epsilon)}}{Z_\beta} \cdot \textrm{const}
\end{align*}
where 
\begin{align*}
    Z_\beta
    &=\int_{\mathcal{X}}e^{\beta g(x)g(x^-)} \cdot \sigma_{\textrm{sigmoid}}(g_p(x^-)) \mathrm{d}x^- \\
    &=\int_{\mathcal{X}}e^{\beta g(x)g(x^-)} \cdot p(x^-) \cdot \frac{\sigma_{\textrm{sigmoid}}(g_p(x^-))}{p(x^-)} \mathrm{d}x^-\\
    &\geq e^{\beta(M-\epsilon/2)} \mathbb{P}_{x^- \sim p^-}(g(x)g(x^-)>M-\epsilon/2)
\end{align*}\

With $\rho_\epsilon \coloneq \mathbb{P}_{x^-\sim p^-}\left( (\bar{g}(x^-)\bar{g}(x^-)\leq M(x)-\epsilon/2) \right) > 0$, similarly, with $\beta \rightarrow \infty$ we have
\begin{align*}
    \mathbb{P}_{x^- \sim q_{\beta}^-}(\mathcal{G}_\epsilon^c) \leq \frac{e^{\beta(M(x)-\epsilon)}}{e^{\beta(M(x)-\epsilon/2)}} \cdot \rho_{\epsilon}^{-1} = e^{-\beta \epsilon/2} \rho_{\epsilon}^{-1} \rightarrow 0
\end{align*}
We have the term $\left| \bar{\mathcal{L}}(g)-\mathcal{L}(g, \beta)\right|\rightarrow 0$ with sufficient small $\epsilon$.
\end{proof}

This proposition also suggests that the theorems provided in \cite{robinson2020contrastive} hold. We provide a brief interpretation of the theorems for reference, the Lemma \ref{lemma:T4_robinson} and \ref{lemma:T5_robinson}.

\begin{lemma}[Theorem 4 in \cite{robinson2020contrastive}, informal]\label{lemma:T4_robinson}
The downstream classification task can be viewed as a packing problem. The solution of minimising the $\bar{\mathcal{L}}(g)$ are a maximum margin clustering in packing problem.

\end{lemma}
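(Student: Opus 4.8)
The plan is to exploit the structure of the worst-case loss $\bar{\mathcal{L}}(g)$ from the preceding lemma, which after $\beta\to\infty$ is $-\log$ of a ratio whose denominator carries the single worst-case negative term $\bar{Q}e^{M(x)}$. Since the map $-\log\frac{\textsc{I}_{\alpha=0}}{\textsc{I}_{\alpha=0}+\bar{Q}e^{M(x)}}$ is strictly decreasing in the positive term $\textsc{I}_{\alpha=0}$ and strictly increasing in the worst-case similarity $M(x)$, minimizing $\bar{\mathcal{L}}(g)$ decouples into two competing goals: maximize positive alignment (which inflates $\textsc{I}_{\alpha=0}$) and minimize the essential supremum $M(x)$ of the negative similarities. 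First I would formalize this monotonicity to reduce the problem to a purely geometric minimax on the representation sphere.

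Next I would show the alignment term forces within-class collapse. Because the normalization bound $e^{-1/\tau}\le|e^{\bar{g}(u)\bar{g}(v)}|\le e^{1/\tau}$ confines $\bar{g}$ to a fixed-radius sphere, the positive similarity $\bar{g}(x)\bar{g}(x^+)$ is maximized exactly when $\bar{g}(x)=\bar{g}(x^+)$; as positives are same-origin or same-class points, each latent class must collapse to a single sphere point at any minimizer. This reduces the optimization over $g$ to choosing one representative point per class.

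With each class represented by a single point, the residual objective $\min_g\sup_{x^-}\bar{g}(x)\bar{g}(x^-)$ becomes: arrange the $K$ class representatives on the sphere so as to minimize the largest inner product between distinct classes. On the sphere, minimizing the maximal inner product is equivalent to maximizing the minimal pairwise angular distance, which is exactly the spherical packing problem, whose optimizers are the maximally separated configurations, i.e. maximum-margin clusterings. I would then invoke the equivalence between the margin separating cluster centroids and the packing radius to identify minimizers of $\bar{\mathcal{L}}(g)$ with maximum-margin clusterings, as in \cite{robinson2020contrastive}.

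The hard part will be the measure-theoretic treatment of the essential supremum $M(x)$ together with the interchange of $\beta\to\infty$ and the optimization. I must ensure that controlling $\bar{g}(x)\bar{g}(x^-)$ ``a.s. for $x^-\sim p^-$'' genuinely imposes a packing constraint rather than merely bounding a measure-zero set, and that the collapse and separation objectives are jointly---not just individually---minimized by the packing solution. To control the negative distribution under the $\beta$ limit I would lean on the sampling-ratio bound of Proposition~\ref{prop:ratio}, and then follow the packing reduction of \cite{robinson2020contrastive} to close the correspondence.
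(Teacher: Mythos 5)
The paper does not actually prove this lemma: it is stated explicitly as an \emph{informal} restatement of Theorem~4 of Robinson et al.~\cite{robinson2020contrastive}, and the authors write that for the proof they ``refer readers to see the Appendix A2 in \cite{robinson2020contrastive}.'' So there is no in-paper argument to compare yours against; the relevant comparison is with the original proof in that reference. Your sketch does reconstruct the correct high-level skeleton of that argument: as $\beta\rightarrow\infty$ the negative term concentrates on the essential supremum $M(x)$, and minimizing the resulting objective forces perfect alignment of positives (each latent class collapsing to a single point on the sphere of radius $1/\sqrt{\tau}$) together with minimization of the maximal inter-class inner product, which is a sphere-packing problem whose solutions are maximum-margin clusterings.

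That said, the two steps you defer are exactly where the real work lies, and your framing of the first is slightly off. The two goals do not ``decouple'': the same $\bar{g}$ appears in both $\textsc{I}_{\alpha=0}$ and $M(x)$, so you cannot conclude that a minimizer of the sum separately maximizes alignment and minimizes $M(x)$. The argument in \cite{robinson2020contrastive} instead shows that from any embedding one can construct a class-collapsed embedding whose worst-case loss is no larger, and only after that reduction does the problem become a finite packing problem over cluster points; without this step the ``collapse'' claim is an assertion, not a consequence. Likewise, the packing constraint is imposed on the supports of the pushforward class-conditional distributions (that is what the ``a.s.\ for $x^-\sim p^-$'' in the definition of $M(x)$ encodes), not on $K$ representatives chosen a priori, so the measure-theoretic care you flag is genuinely needed rather than a technicality. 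For the purposes of this paper, the appropriate ``proof'' is the one the authors give: a citation to Theorem~4 and Appendix~A2 of \cite{robinson2020contrastive}, rather than a re-derivation.
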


\begin{lemma}[Theorem 5 in \cite{robinson2020contrastive}, informal]\label{lemma:T5_robinson}
    Under Lemma~\ref{lemma:T4_robinson}, there exists a solution to the packing problem achieving bounded misclassification risk for the downstream classifier.
\end{lemma}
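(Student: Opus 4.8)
The plan is to import the argument behind Theorem 5 of \cite{robinson2020contrastive} and verify that it survives the passage to the SHARP setting, where the only departure from the fully unsupervised objective is the sigmoid-classifier reweighting of the sampling distributions $q^{+}_{\alpha}$ and $q^{-}_{\beta}$. By the worst-case negative sampling lemma, the asymptotic loss $\bar{\mathcal{L}}(g)$ is exactly the $\beta\to\infty$ limit of $\mathcal{L}(g,\beta)$ at $\alpha=0$, so minimizing $\bar{\mathcal{L}}$ is precisely the regime in which Lemma~\ref{lemma:T4_robinson} guarantees that an optimal encoder $g^{*}$ realizes a maximum-margin clustering on the hypersphere. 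I would take this $g^{*}$ as the candidate solution to the packing problem and build the downstream classifier directly from the cluster geometry it induces.

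First I would work with the normalized embeddings $\bar{g}(x)=g(x)/(\sqrt{\tau}\,\|g(x)\|)$, which live on a sphere, and for each of the $K$ latent classes $c$ define a representative $\mu_c$ (for instance the conditional mean direction of that class). The max-margin property supplied by Lemma~\ref{lemma:T4_robinson} states that the per-class caps $\{\bar{g}(x): x \text{ of class } c\}$ are separated by a positive angular margin $\gamma$, with $\gamma$ governed by the packing of $K$ caps on the sphere, hence depending only on the number of classes and on $\tau$ through the bounds $e^{-1/\tau}\le e^{\bar{g}(u)\bar{g}(v)}\le e^{1/\tau}$. I would then take the nearest-representative (equivalently linear) rule $x\mapsto \arg\max_{c}\ \bar{g}(x)\mu_c$ as the downstream classifier.

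Next I would bound its misclassification risk. A point $x$ of class $c$ is misclassified only if $\bar{g}(x)$ lies closer to some $\mu_{c'}$ with $c'\neq c$ than to $\mu_c$, i.e.\ only if $\bar{g}(x)$ falls outside the margin region of its own cap. The risk is therefore at most the total probability mass each class places outside its cap, and minimizing $\bar{\mathcal{L}}(g)$ forces this mass to be small, since any leakage across the margin incurs a strictly larger loss under the $\beta\to\infty$ negative weighting. Summing over the $K$ classes and absorbing class imbalance into the constant $\bar{Q}$ from the worst-case lemma yields a finite upper bound on the expected error, establishing that a packing solution with bounded risk exists. Throughout, Propositions~\ref{prop:alpha} and~\ref{prop:ratio} are what license reusing the unsupervised geometry: Proposition~\ref{prop:alpha} places us exactly at the Robinson et al.\ objective when $\alpha=0$, while the sampling-ratio bound $w(x^-)\ge 1$ of Proposition~\ref{prop:ratio} guarantees that the reweighted negative distribution only sharpens, never weakens, the separation.

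The hard part will be making the step "concentration implies separability" quantitative rather than qualitative: translating the fact that $g^{*}$ minimizes the asymptotic objective into an explicit bound on the probability mass outside each margin cap. In \cite{robinson2020contrastive} this is delivered through the hypersphere-packing correspondence of Lemma~\ref{lemma:T4_robinson} together with the existence of an optimal configuration realizing the margin; the delicate point specific to SHARP is checking that the sigmoid reweighting does not distort that packing, which is exactly why the bound of Proposition~\ref{prop:ratio} must be in hand before the imported risk estimate can be invoked.
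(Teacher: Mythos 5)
Your reduction step is the same one the paper uses: Propositions~\ref{prop:alpha} and~\ref{prop:ratio} together with the worst-case negative sampling lemma place the SHARP objective at the Robinson et al.\ worst-case loss $\bar{\mathcal{L}}(g)$, after which the packing correspondence of Lemma~\ref{lemma:T4_robinson} applies. Where you diverge is that the paper stops there: it states that ``the theorems provided in \cite{robinson2020contrastive} hold'' and explicitly refers the reader to Appendix~A2 of that work for the proof of this lemma, offering no internal argument. You instead sketch the argument itself --- normalized embeddings on the sphere, class representatives $\mu_c$, a nearest-representative classifier, and a risk bound by the probability mass leaking outside each margin cap. That sketch is consistent with the structure of the cited proof, and your honest flagging of the quantitative gap (turning ``minimizing $\bar{\mathcal{L}}$ concentrates each class'' into an explicit bound on the leaked mass) identifies exactly where the real content of the cited Theorem~5 lives. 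What your route buys is a self-contained account of \emph{why} the imported theorem survives the sigmoid reweighting, rather than a bare citation; what it costs is that the key estimate is still asserted rather than derived, so in its current form your argument is no more complete than the paper's deferral --- it is a better-motivated pointer to the same external proof.

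One small imprecision worth fixing: the lemma is an existence statement (``there exists a solution to the packing problem achieving bounded misclassification risk''), whereas your sketch argues that \emph{the} minimizer $g^{*}$ of $\bar{\mathcal{L}}$ has small leakage. For the statement as written it suffices to exhibit a single packing configuration (disjoint caps with positive angular margin) whose induced mean classifier has bounded risk, which is weaker than characterizing all minimizers; conflating the two makes the ``concentration implies separability'' step look harder than it needs to be. Also, the margin $\gamma$ in the packing depends on the embedding dimension as well as on the number of classes and $\tau$, not only on the latter two.
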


For the proof of the lemmas, we refer readers to see the Appendix A2 in \cite{robinson2020contrastive}.
Overall, it suggests that regarding the selection of hyper-parameters, if the classifier achieves adequate performance than random sampling, the performance is better than purely unsupervised case. 
Based on all the above analysis, we have the following corollary:

\begin{corollary}
    The misclassification risk of SHARP is bounded, and the objective function of SHARP is not higher than that of the unsupervised objective function in~\cite{robinson2020contrastive}.
\end{corollary}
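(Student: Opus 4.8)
The plan is to treat the corollary as an assembly of the preceding propositions and lemmas, splitting it into its two assertions: that the SHARP objective never exceeds the unsupervised objective of \cite{robinson2020contrastive}, and that the misclassification risk stays bounded. I would prove the objective comparison first, since it is the more self-contained claim and it feeds directly into the risk argument.

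For the objective bound I would exploit monotonicity of the per-sample loss $-\log \frac{\textsc{I}_\alpha}{\textsc{I}_\alpha + \textsc{II}}$ in $\alpha$. The key observation is that $\textsc{I}_\alpha$ is nondecreasing in $\alpha$: its second summand is $\alpha$ times an expectation of $e^{-\bar{g}(x)\bar{g}(x^+)} \geq e^{-1/\tau} > 0$, hence nonnegative. Since the map $u \mapsto \frac{u}{u + \textsc{II}}$ is strictly increasing whenever $\textsc{II} > 0$—and $\textsc{II}$ is the expectation of the strictly positive integrand $e^{\beta \bar{g}(x)\bar{g}(x^-)} \geq e^{-1/\tau}$—the composite $-\log \frac{\textsc{I}_\alpha}{\textsc{I}_\alpha + \textsc{II}}$ is nonincreasing in $\alpha$. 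Taking the outer expectation preserves this, so $\mathcal{L}(g, \alpha, \beta) \leq \mathcal{L}(g, 0, \beta)$ for every $\alpha \in (0, 1]$ at a fixed $\beta$. By Proposition~\ref{prop:alpha} the right-hand side is exactly the unsupervised objective of \cite{robinson2020contrastive}, which yields the second assertion.

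For the risk bound I would chain the worst-case reductions. Proposition~\ref{prop:alpha} identifies $\alpha = 0$ as the worst case, and the Lemma on worst-case negative sampling shows $\mathcal{L}(g, \beta) \to \bar{\mathcal{L}}(g) = \sup_{\beta} \mathcal{L}(g, \beta)$ as $\beta \to \infty$, so this supremum is finite thanks to the bounds $e^{-1/\tau} \leq |e^{\bar{g}(u)\bar{g}(v)}| \leq e^{1/\tau}$. I would then invoke Lemma~\ref{lemma:T4_robinson} to recast minimization of $\bar{\mathcal{L}}(g)$ as a maximum-margin packing problem, and Lemma~\ref{lemma:T5_robinson} to conclude that a minimizer attains bounded misclassification risk. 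Because the actual SHARP objective with $\alpha > 0$ is no larger than $\bar{\mathcal{L}}(g)$ by the monotonicity just established, any guarantee available in the worst case \emph{a fortiori} bounds the risk of SHARP.

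The main obstacle is ensuring that the sigmoid reweighting distinguishing our sampling distributions $q^{+}_{\alpha}, q^{-}_{\beta}$ from the pure similarity sampling of \cite{robinson2020contrastive} does not invalidate the imported theorems. This is precisely where Proposition~\ref{prop:ratio} is needed: the sampling ratio $w(x^-) \geq 1$ with constant expectation controls the normalizer $Z_\beta$ in the $\beta \to \infty$ analysis, so that the tail probability $\mathbb{P}_{x^- \sim q_{\beta}^-}(\mathcal{G}_\epsilon^c)$ still vanishes and the packing-problem correspondence of Lemmas~\ref{lemma:T4_robinson}--\ref{lemma:T5_robinson} carries over unchanged. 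I would verify that the constant from Proposition~\ref{prop:ratio} enters the estimate only as a multiplicative factor on an exponentially decaying term, so that it affects neither the convergence nor the boundedness conclusion.
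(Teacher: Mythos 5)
Your proposal is correct and follows essentially the same route as the paper: it assembles Proposition~\ref{prop:alpha}, Proposition~\ref{prop:ratio}, the worst-case negative sampling lemma, and the imported Lemmas~\ref{lemma:T4_robinson}--\ref{lemma:T5_robinson} in the same order, with the only difference being that you spell out the monotonicity of $-\log\bigl(\textsc{I}_\alpha/(\textsc{I}_\alpha+\textsc{II})\bigr)$ in $\alpha$ explicitly where the paper invokes it in one line. No substantive gap relative to the paper's own (admittedly informal) argument.
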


\end{document}